\theoremstyle{plain} %\crefname{satz}{Satz}{S\"atze}
\newcommand\cA{\ensuremath{\mathcal{A}}\xspace}
\newcommand\cB{\ensuremath{\mathcal{B}}\xspace}
\newcommand\cC{\ensuremath{\mathcal{C}}\xspace}
\newcommand\cD{\ensuremath{\mathcal{D}}\xspace}
\newcommand\cE{\ensuremath{\mathcal{E}}\xspace}
\newcommand\cI{\ensuremath{\mathcal{I}}\xspace}
\newcommand\cJ{\ensuremath{\mathcal{J}}\xspace}
\newcommand\cK{\ensuremath{\mathcal{K}}\xspace}
\newcommand\cL{\ensuremath{\mathcal{L}}\xspace}
\newcommand\cR{\ensuremath{\mathcal{R}}\xspace}
\newcommand\cT{\ensuremath{\mathcal{T}}\xspace}
\newcommand\cW{\ensuremath{\mathcal{W}}\xspace}
\newcommand{\alc}{$\mathcal{ALC}$\xspace}
\newcommand{\shiq}{$\mathcal{SHIQ}$\xspace}
\newcommand{\sroiqd}{$\mathcal{SROIQ}$(\textbf{D})\xspace}
\newcommand{\dlf}[1]{% 
	\ifx\@currenvir\@math%
	{\normalfont\mathsf{#1}}%
	\else%
	{\normalfont\textsf{#1}}%
	\fi%
}%
\begin{document}
	
	\title[Querying Inconsistent Uncertain Knowledge Bases]{Exploiting Uncertainty for Querying Inconsistent Description Logics Knowledge Bases}
	
	% affiliations are numbered automatically with a, b, c (see below)
	% use the optional argument to indicate the affiliation(s) of each author
	% omit the argument if there is only one author, or only one affiliation
	\author[R.~Zese]{Riccardo Zese\lmcsorcid{0000-0001-8352-6304}}[a]
	\author[E.~Lamma]{Evelina Lamma\lmcsorcid{0000-0003-2747-4292}}[b]
	\author[F.~Riguzzi]{Fabrizio Riguzzi\lmcsorcid{0000-0003-1654-9703}}[c]
	
	% affiliation 1 (automatically numbered a)
	\address{Dip. di Scienze Chimiche, Farmaceutiche e Agrarie, Universit\`a di Ferrara, Via Luigi Borsari 46, 44121, Ferrara, Italy}	%optional
	% write emails for all authors having that affiliation
	\email{riccardo.zese@unife.it}  %optional
	
	% affiliation 2 (automatically numbered b)
	\address{Dip. di Ingegneria, Universit\`a di Ferrara, Via Saragat 1, 44122, Ferrara, Italy}	%optional
	\email{evelina.lamma@unife.it}  %optional
	
	% affiliation 3 (automatically numbered c)
	\address{Dip. di Matematica e Informatica, Universit\`a di Ferrara, Via Macchiavelli 30, 44122, Ferrara, Italy}	%optional
	\email{fabrizio.riguzzi@unife.it}  %optional

	\begin{abstract}
		The necessity to manage inconsistency in Description Logics Knowledge Bases~(KBs) has come to the fore with the increasing importance gained by the Semantic Web, where information comes from different sources that constantly change their content and may contain contradictory descriptions when considered either alone or together.
		Classical reasoning algorithms do not handle inconsistent KBs, forcing the debugging of the KB in order to remove the inconsistency. In this paper, we exploit an existing probabilistic semantics called DISPONTE to overcome this problem and allow queries also in case of inconsistent KBs. 
		We implemented our approach in the reasoners TRILL and BUNDLE and empirically tested the validity of our proposal. Moreover, we formally compare the presented approach to that of the repair semantics, one of the most established semantics when considering DL reasoning tasks.
	\end{abstract}
	\keywords{Inconsistent Knowledge Base, Probabilistic Reasoning, OWL Reasoner}
	
	\maketitle
	
	\section{Introduction}
	\label{sec:intro}
	In the Semantic Web, one of the main goals is to create comprehensive Knowledge Bases (KBs) by combining contributions coming from different sources. This may easily lead to contradictions, as the different sources my represent different points of view.
	
	A classic example is given by the \textit{flying penguin} problem, where a KB defines that all birds fly and that penguins are birds, but they are not able to fly. This is clearly a contradiction and so, if the KB asserts that \textit{pingu} is a penguin, the KB is inconsistent because \textit{pingu} belongs to both the concept ``fly'' and to its complement. With standard reasoning techniques, when the KB is inconsistent, inference is trivial as anything is entailed by an inconsistent KB. For this reason, systems implementing such techniques do not allow the execution of queries when the KB is inconsistent or allow only the identification of axioms causing the inconsistency.
	In non-monotonic reasoning this problem has been solved by considering a unique KB where non-monotonic knowledge representation is adopted, often with negative literals for coping with exception or abnormalities as penguins in the example above. 
	
	Other approaches to manage inconsistent pieces of information use monotonic logics, such as Description Logics (DLs), and  the so-called \emph{repairs}, consistent subsets of axioms of the KB built when the query is asked. Given the set of repairs of a KB, there are different semantics that define the conditions the query must fulfil to be true~\cite{DBLP:conf/rr/LemboLRRS10,DBLP:conf/rweb/BienvenuB16}, among them, the most used are the Brave~\cite{DBLP:conf/ijcai/BienvenuR13}, AR~\cite{DBLP:conf/rr/LemboLRRS10}, and IAR~\cite{DBLP:conf/rr/LemboLRRS10} semantics. 
	
	However, despite the number of works on managing inconsistent KBs, very few proposals consider the fact that information is usually uncertain in real world domains. To fill this gap, we exploit the DISPONTE semantics~\cite{RigBelLamZes15-SW-IJ,Zese17-SSW-BK}, where the axioms of the KBs are associated to probability values, defining the degree of belief in the presence of the axiom in the KB. 
	Defining these probability values could be done by asking to a domain expert.
	Or else, one can associate a probability value to axioms depending on how much they trust the source of the information, giving more confidence, in the example above, to information coming from ornithologists than that coming from, e.g., Wikipedia. Moreover, one interesting feature of DISPONTE is that it can be applied to the underlying Semantic Web language by  exploiting annotations: the probability can be added to axioms without affecting the syntax of the language. 
	This avoids limitations on the expressivity of the languages that are used to define the KBs and compatibility problems with other KBs.

	The contributions of the paper are fourfold.
	First, we show how an existing  probabilistic semantics, DISPONTE, can be used to reason with inconsistent DL KBs by returning a value expressing how much we can  believe in the truth of the query. With respect to existing approaches, our proposal is not only able to deal with very expressive DLs, features that shares with some of the other approaches, but it also give an implemented system able to manage the entire pipeline of answering a query w.r.t. KBs modelled using very expressive DLs.
	Second, the paper discusses the changes that must be implemented to allow a reasoner exploiting the tableau algorithm to cope with this semantics and answer queries w.r.t. possibly inconsistent KBs.
	Third, despite the number of proposals for dealing with inconsistent KBs, there are few  systems that are implemented: the paper provides two different working implementations of this approach included in the reasoners BUNDLE~\cite{RigBelLamZese13-RR13b-IC,CotRigZesBelLam-IC-2018} and TRILL~\cite{DBLP:journals/tplp/ZeseCLBR19,ZesBelRig16-AMAI-IJ,DBLP:journals/ws/ZeseC21}. These are two probabilistic reasoners that answer queries under DISPONTE, the first is implemented in Java, while the second in Prolog. In particular, BUNDLE encapsulates several reasoners, such as the non-probabilistic reasoners Pellet~\cite{DBLP:journals/ws/SirinPGKK07} and Hermit~\cite{shearer2008hermit}, or the probabilistic reasoner TRILL, which can be used both inside BUNDLE and stand-alone. As we will show, the extension can be easily implemented in other reasoners in order to answer queries w.r.t. inconsistent KBs.
	Fourth, the paper compares the presented approach to that of the repair semantics, one of the most established semantics when considering DL reasoning tasks, discussing how the reasoning workflow of the approach can be adapted to also answer queries under different repair semantics. In fact, the process of computing the probability of a query is somewhat similar to the construction of the repairs, as it needs to collect consistent subsets of the axioms of the KB that make the query true.
	Fifth, these theoretical results have been implemented in BUNDLE and TRILL that are able to follow both the DISPONTE and repair semantics. With the proposed extension we can compute at the same time (i) the justification of the queries and those of the inconsistency, (ii) the DISPONTE probability of queries and (iii) their truth under Brave/AR/IAR repair semantics w.r.t. every KB described in any DL accompanied by a system based on the tableau capable of collecting justifications, in order to give to the user more insight on the truth of the queries asked.

	The paper is organized as follows. Section~\ref{sec:bg} introduces background information about Description Logics, their extension to probability by means of DISPONTE, and recalls the most common repair semantics. 
	Section~\ref{sec:query-inc-kb} discusses the proposed probabilistic semantics and the extensions needed by a reasoner to cope with it. Section~\ref{sec:comp-repair} illustrates the differences of our semantics with the repair semantics, considering the Brave, AR and IAR semantics. Section~\ref{sec:related} discusses related work.
	Section~\ref{sec:exp} shows the results of some tests of two prototype reasoners implementing the extension discussed in Section~\ref{sec:reas-alg}. The implementation of two different prototypes is also intended to demonstrate the ease with which this semantics can be adopted.  Finally, Section~\ref{sec:conc} concludes the paper.

	\section{Preliminaries}
	\label{sec:bg}

	In this section we present the background necessary for the next sections. In the first sub-section, we will briefly describe the $\mathcal{ALC}$ DL. Then we will move on to introduce the semantics DISPONTE, also describing the syntax to use to associate probability to axioms and how to use them to calculate the probability of queries. Finally, we will provide the definitions of the repair semantics.

	\subsection{Description Logics}
	\label{sec:description-logics}
	
	DLs are a family of logic-based knowledge representation formalisms which are
	of particular interest for representing ontologies in the Semantic Web. For a good
	introduction to DLs we refer to~\cite{Badeer:2008:DL:52211}.

	The different DL languages represent information by means of individuals (domain elements), concepts (unary relations that can be interpreted by sets of domain elements), and roles (binary relations that can be interpreted by sets of pairs of domain elements). They differ in how concepts and roles can be combined for defining complex concepts.
	We briefly review the \alc DL, however, the results presented in this paper can be applied to any DL.

	Let $N_I$, $N_R$, and $N_C$ be countable infinite sets of individual, role and concept names.
	\emph{Concepts} $C$ are defined as 
	\begin{align}
		C::= A\ |\ \bot\ |\ \top\ |\ (C\sqcap C)\ |\ (C\sqcup C) \ |\ \neg C\ |\ \exists R.C\ |\ \forall R.C\label{grammar}
	\end{align}
	where  $A\in N_C$, $R\in N_R$.

	A \emph{knowledge base} (KB) consists of a finite set of \textit{general concept inclusion axioms} (GCIs) $C_1\sqsubseteq C_2$ where $C_1$ and $C_2$ are concepts built according to the grammar (\ref{grammar}), called \emph{TBox}, and a finite set of \textit{concept assertions} $a:C$ and \textit{role assertions} $(a, b):R$ where $R \in N_R$, and $a,b \in N_I$, called \emph{ABox}. A knowledge base is a pair  $\cK=(\cA,\cT)$ where \cA is a ABox and \cT a TBox.
	
	The semantics of DLs is formally defined using interpretations $\cI = (\Delta^\cI , \cdot^\cI )$, where $\Delta^\cI$ is a non-empty \emph{domain}, and $\cdot^\cI$ is an \emph{interpretation function} that maps each $a\in N_I$ to an element of $\Delta ^\cI$, each $C\in N_C$ to a subset of $\Delta^\cI$, and each $R\in N_R$ to a subset of $\Delta^\cI \times \Delta^\cI$.
	The mapping $\cdot^\cI$ is extended to complex concepts as follows (where 
	$R^\cI(x) = \{y | (x, y) \in R^\cI\}$):
	\begin{footnotesize}
		$$
		\begin{array}{rcl}
			\top^\cI&=&\Delta^\cI \\
			\bot^\cI&=&\emptyset\\
			(\neg C)^\cI&=&\Delta^\cI\setminus C^\cI\\
			(C_1\sqcup C_2)^\cI&=&C_1^\cI \cup C_2^\cI\\
			(C_1\sqcap C_2)^\cI&=&C_1^\cI \cap C_2^\cI\\
			(\exists R.C)^\cI&=&\{x\in \Delta^\cI| R^\cI(x)\cap C^\cI\neq \emptyset\}\\
			(\forall R.C)^\cI&=&\{x\in \Delta^\cI| R^\cI(x)\subseteq C^\cI\}
		\end{array}
		$$
	\end{footnotesize}
	
	Given a specific interpretation $\cI$, we can determine the \emph{satisfaction} of an axiom $Q$, i.e., whether the axiom holds (is true) with respect to $\cI$.  The satisfaction, denoted by $\cI \models Q$, is defined as follows:
	\begin{enumerate}
		\item a concept inclusion axiom $C\sqsubseteq D$ is satisfied in $\cI$ iff $C^\cI \subseteq D^\cI$, 
		\item a concept assertion axiom $a : C$ is satisfied in $\cI$ iff $a^\cI \in C^\cI$, 
		\item a role assertion axiom $(a, b) : R$ is satisfied by $\cI$ iff $(a^\cI , b^\cI ) \in R^\cI$, 
	\end{enumerate}

	\noindent 
	$\cI$ \textit{satisfies} a set of axioms $\mathcal{E}$, denoted by $\cI \models \mathcal{E}$, iff $\cI \models  E$ for all $E \in  \mathcal{E}$.
	An interpretation $\cI$ \textit{satisfies} a knowledge base $\cK$, denoted $\cI \models \cK$, iff $\cI$ satisfies all the axioms contained in $\cK$. In this case we say that $\cI$ is a \emph{model} of $\cK$.
	A knowledge base $\cK$ is \textit{consistent} (or \textit{satisfiable}) iff there exists an interpretation $\cI$ that satisfies $\cK$, otherwise it is \textit{inconsistent} (or \textit{unsatisfiable}),
	i.e., $\cK\models\perp$. An axiom $Q$ is  \textit{entailed} by $\cK$, denoted $\cK \models Q$, iff every interpretation that satisfies $\cK$ also satisfies $Q$.

	We consider Boolean queries $Q$ over a KB $\cK$, i.e., $Q$ is an axiom for which we want to test the entailment from the KB, written as $\cK \models Q$. The entailment test may be reduced to checking the unsatisfiability of a 
	KB or the unsatisfiability of a concept w.r.t. a KB. A concept is satisfiable in a KB \cK if there exists an interpretation \cI such that $C^{\cI}\neq \emptyset$ and $\cI\models \cK$. For example, the entailment of the axiom $C\sqsubseteq D$ by a KB \cK may be tested by checking the unsatisfiability of the concept $C \sqcap \neg D$ w.r.t. \cK, 
	while the entailment of the axiom $a:C$ may be tested by checking the unsatisfiability of the KB with the addition of $a: \neg C$.

	\subsection{Probabilistic Description Logics}
	\label{sec:prob-descr-logics}

	DISPONTE~\cite{RigBelLamZes15-SW-IJ,Zese17-SSW-BK} is based on the distribution semantics \cite{DBLP:conf/iclp/Sato95} and  allows the user to label some axioms with real values $p \in [0,1]$ representing probabilities.
	In DISPONTE, a probabilistic  knowledge base \cK is a set of probabilistic axioms and a set of certain axioms.
	A \textit{certain axiom} takes the form of a regular DL axiom.
	A \textit{probabilistic axiom} takes the form $p::E$, where $p$ is a probability and $E$ a regular axiom. The probability $p$ can be interpreted as an \emph{epistemic probability}, i.e., as the degree of our belief in evidence for axiom $E$. For example, a probabilistic assertion axiom $p :: a : C$ means that we have evidence for $C(a)$ with degree of belief $p$. This $p$ is not the probability of the truth of axiom $E$ but the probability of evidence of axiom $E$.  In case we have two independent sources of evidence for the same axiom $E$, one with probability $p_1$ and one with probability $p_2$, we would add the probabilistic axiom	$1-(1-p_1)(1-p_2)::E$ to \cK because the different bits of evidence are independent.
	
	The idea of DISPONTE is to associate independent Boolean random variables to the probabilistic axioms. By assigning values to every random variable, we obtain a \emph{world}, i.e., a non-probabilistic KB  containing all the certain axioms and the set of axioms whose random variables are assigned the value 1.
	Given a query $Q$, DISPONTE defines the probability of $Q$ by constructing worlds. 
	If $p_i :: E_i$ is the $i$th probabilistic axiom in the KB, every world $w$ has a probability $P(w) = \prod_{E_i\in w}p_i\times\prod_{E_i\not\in w}(1-p_i)$, because the presence of the axioms is  considered pair-wise independent.
	$P(w)$ is a probability distribution over worlds.
	
	Given a  consistent KB $\cK$, a query $Q$, and the set of all worlds  $\cW_\cK$, the probability of $Q$ is the sum of the probabilities of the worlds in which the query is true~\cite{RigBelLamZes15-SW-IJ}:
	$$
	P(Q) = \sum_{w\in \cW_\cK : w\models Q}P(w)
	$$
	\noindent
	So, $P(Q)$ is the probability of $Q$ w.r.t. the entire consistent KB, i.e., every axiom of the KB can contribute to the query probability.

	\begin{exa}[Flying Penguins - 1]
		\label{ex:pingu-prob}
	
		Consider the following KB:
		\setcounter{equation}{1}
		\begin{align}
			&(1)\ 0.9:: \mathit{Penguin} \sqsubseteq\mathit{Bird}\notag\\
			&(2)\ 0.9:: \mathit{Penguin} \sqsubseteq\neg\mathit{Fly}\notag\\
			&(3)\ 0.6:: \mathit{pingu}:	\mathit{Penguin}\notag
		\end{align}
		
		The first two axioms state that penguins are birds and that penguins do not fly, both with probability $0.9$. The third states that $\mathit{pingu}$, an individual, is a penguin with probability $0.6$.
		This KB is consistent and has 8 possible worlds because there are 3 probabilistic axioms, and each of them may be contained in a world or not. Thus, there are $2^3$ possible worlds, which are all the possible combinations given by the probabilistic axioms. Let us consider the query $Q=\mathit{pingu}:\mathit{Bird}$. This query is true in two worlds, those containing the first and third axioms. The probability is $P(Q) = 0.9\cdot 0.9\cdot 0.6 + 0.9\cdot0.1\cdot 0.6 = 0.54$.
	\end{exa}

	\begin{exa}[Flying Penguins - 1.1]
	
		Consider the slightly different KB:
		\setcounter{equation}{1}
		\begin{align}
			&(1)\ 0.9:: \mathit{Penguin} \sqsubseteq\mathit{Bird}\notag\\
			&(3)\ 0.6:: \mathit{pingu}:	\mathit{Penguin}\notag\\
			&(4)\ 0.6:: \mathit{pingu}:	\mathit{Bird}\notag
		\end{align}
		%\begin{align}
		%	&(1)\ 0.9:: \mathit{Penguin} \sqsubseteq\mathit{Bird}\>\>\>\>
		%	(2)\ 0.9:: \mathit{Penguin} \sqsubseteq\neg\mathit{Fly}\>\>\>\>
		%	(3)\ 0.6:: \mathit{pingu}:	\mathit{Penguin}\notag&
		%\end{align}
		This KB differs from that of Example \ref{ex:pingu-prob} because we replaced axiom (2) with axiom (4) expressing that $\mathit{pingu}$ is a bird with probability $0.6$.
		Let us consider the query $Q=\mathit{pingu}:\mathit{Bird}$. We have 8 possible worlds, and $Q$ is true in 5 of them, i.e., all the worlds where there are both axioms (1) and (3), as in Example \ref{ex:pingu-prob}, or there is axiom~(4). In this case, $P(Q)= 0.9\cdot 0.6\cdot 0.6 + 0.9\cdot0.6\cdot 0.4 + 0.1\cdot0.4\cdot 0.6+ 0.1\cdot0.6\cdot 0.6+ 0.9\cdot0.4\cdot 0.6= 0.816$, which is higher than the probability of axiom (4). This is due to the fact that each axiom is independent from the other and each axiom contributes to the query probability.
	\end{exa}

	However, computing the probability of queries building all the worlds is infeasible because their number is exponential in the number of probabilistic axioms in the KB.  To try to circumvent this problem, it is possible to resort to classical inference algorithms for collecting justifications for the query, which are usually less than the worlds in the case of large KBs, and compute the probability from them. The problem of finding justifications for a query has been investigated by various authors~\cite{DBLP:conf/ijcai/SchlobachC03,extended_tracing,DBLP:journals/jar/HorrocksS07,DBLP:conf/cade/SebastianiV09,DBLP:journals/jar/BaaderP10,DBLP:journals/logcom/BaaderP10,DBLP:conf/sat/ArifMM15}. %We recall that a justification may represent many different worlds.
	
	A \emph{justification} for a query $Q$ is a consistent inclusion-minimal subset $\cE$ of logical axioms  of a KB $\cK$ such that $\cE \models Q$.  $\textsc{all-just}(Q,\cK)$ indicates the set of all justifications for the query $Q$ in the KB $\cK$. On the other hand, a justification for the inconsistency of a KB is an inclusion-minimal subset $\cE$ of logical axioms of a KB $\cK$ such that $\cE$ is inconsistent. 
	%With an abuse of notation, it can be seen as a justification (inconsistent in this case) for the query $\top\sqsubseteq\perp$. 
	$\textsc{all-just}(\mathit{Incons},\cK)$ indicates the set of all the justifications for the inconsistency of the KB $\cK$.  
	Note that, if the KB is consistent, there is no justification for the inconsistency. On the other hand, if the KB is inconsistent there will be at least one justification $\cE$ for the inconsistency. %Many papers use the term \textit{explanation} instead of justification, however, formally justifications are defined as explanations that are minimal w.r.t. set inclusion.
	Also note that, if the KB is inconsistent, every query $Q$ is formally entailed, even if there are no justifications for $Q$.
	
	A set $\cJ$ of justifications defines a set of worlds $\cW_\cJ=\{w|j\in\cJ,j\subseteq w\}$. $\cJ$ is called \emph{covering} for $Q$ if $\cW_\cJ$ is equal to the set of all the consistent worlds in which $Q$ succeeds, i.e., if for each consistent $w$, $w\models Q$ $\leftrightarrow w\in W_\cJ$; or \emph{covering} for the inconsistency of $\cK$ if it identifies all the inconsistent worlds.

	\begin{exa}[Flying Penguins - 2]
		\label{ex:pingu-expl}
		Let us consider the KB of Example~\ref{ex:pingu-prob} and the query $Q=\mathit{pingu}:\mathit{Bird}$. This query has one justification, i.e., $\{(1),(3)\}$, which is also covering because it defines the two worlds where the query holds.
	\end{exa}

	An effective way of computing the probability of a query from a covering set of justifications consists in compiling it into a Binary Decision Diagram (BDD).
	A BDD is a rooted graph used to represent a function of Boolean variables, with one level for each variable. Each node of the graph has two children corresponding either to the 1 value or the 0 value of the variable associated with the node. Its leaves are either 0 or 1.
	BDDs are used to represent the Disjunctive Normal Form (DNF) Boolean formula $f_{\textsc{all-just}(Q,\cK)}(\mathbf{X})$ built from $\textsc{all-just}(Q,\cK)$.
	\begin{defi}[Boolean formula of a justification]\label{def:bool-f-j}
		Given a justification $\cE$, and a set $\mathbf{X}=\{ X_i\mid p_i::E_i \in \cK\}$ of independent Boolean random variables associated with probabilistic axioms with $P(X_i=1)=p_i$, where $p_i$ is the probability of axiom $E_i$, the Boolean formula of the justification is $f_{\cE}(\mathbf{X})=\bigwedge_{(E_i\in\cE)}X_{i}$.
	\end{defi}
	\begin{defi}[Boolean formula of a set of justifications]\label{def:bool-f-set-j}
		Given a set of justifications $\cJ$, and a set $\mathbf{X}=\{ X_i\mid p_i::E_i \in \cK\}$ of independent Boolean random variables associated with probabilistic axioms with $P(X_i=1)=p_i$, where $p_i$ is the probability of axiom $E_i$, the Boolean formula of the set of justifications is $f_{\cJ}(\mathbf{X})=\bigvee_{\cE\in\cJ}f_{\cE}(\mathbf{X})=\bigvee_{\cE\in\cJ}\bigwedge_{(E_i\in\cE)}X_{i}$.
	\end{defi}
	From Definition~\ref{def:bool-f-set-j}, given a query $Q$ and the set $\textsc{all-just}(Q,\cK)$ of all the justifications for $Q$ in the consistent KB $\cK$, the formula $f_{\textsc{all-just}(Q,\cK)}(\mathbf{X})$ is $\bigvee_{\phi\in\textsc{all-just}(Q,\cK)}\bigwedge_{(E_i\in\phi)}X_{i}$.
	The probability that $f_{\textsc{all-just}(Q,\cK)}(\mathbf{X})$ takes value 1 gives the probability of $Q$~\cite{RigBelLamZes15-SW-IJ,Zese17-SSW-BK}.  The same applies also for inconsistency, % where $\mathit{Incons}$ is the query $\top \sqsubseteq \perp$, 
	where $f_{\textsc{all-just}(\mathit{Incons},\cK)}(\mathbf{X})$ is the DNF Boolean formula representing $\textsc{all-just}(\mathit{Incons},\cK)$. 
	\begin{defi}[Probability of a KB being inconsistent]\label{def:prob-kb-incons}
		The probability of a KB \cK being inconsistent is defined as the probability that $f_{\textsc{all-just}(\mathit{Incons},\cK)}(\mathbf{X})$ takes value 1.
	\end{defi}
	This can be seen as a measure of the inconsistency of the KB.
	
	In principle, to compute the probability we could resort to different possible languages, such as Sentential Decision Diagrams (SDD) or Deterministic Decomposable Negation Normal Form (d-DNNF). We use BDDs because 
	packages for compiling formulas into BDDs are extremely optimized and can manage BDDs of very large size (see e.g.~\cite{DBLP:conf/ijcai/RaedtKT07,DBLP:journals/tplp/KimmigDRCR11,riguzzi2011pita}) while performing sometimes better than SDD packages~\cite{DBLP:journals/tplp/KieselTK22}.
	
	Given the BDD, we can use function \textsc{Probability} described by Kimmig et al. \cite{DBLP:journals/tplp/KimmigDRCR11} to compute the probability.

	\subsection{Repair Semantics}
	In this section we briefly recall the main  definitions and semantics of repairs assuming the TBox consistent.
	\begin{defi}[Repair]
		A repair $\cR$ of a KB $\cK$ is an inclusion-maximal subset of the ABox that is consistent together with the TBox.
	\end{defi}
	Given a query $Q$ and a KB $\cK=(\cA,\cT)$, where $\cA$ is the ABox and $\cT$ is the TBox, we can define the three main semantics for repairs as follows. 
	\begin{defi}[Repair Semantics]\hfill
		\begin{description}
			\item[Brave] A query $Q$ is true over a KB $\cK$ under the Brave semantics, written $\cK\models_{Brave}Q$, if $(\cR,\cT)\models Q$ for at least one repair $\cR$ of $\cK$~\cite{DBLP:conf/ijcai/BienvenuR13}.	
			\item[AR] A query $Q$ is true over a KB $\cK$ under the AR  semantics, written $\cK\models_{AR}Q$, if $(\cR,\cT)\models Q$ for every repair $\cR$ of $\cK$~\cite{DBLP:conf/rr/LemboLRRS10}.
			\item[IAR] A query $Q$ is true over a KB $\cK$ under the IAR  semantics, written $\cK\models_{\mathit{IAR}}Q$, if $(\cD,\cT)\models Q$ where $\cD=\bigcap_{\cR\in Rep(\cK)}\cR$ and $Rep(\cK)$ is the set of all the repairs for $\cK$~\cite{DBLP:conf/rr/LemboLRRS10}.
		\end{description}
		
	\end{defi}
	
	\begin{exa}[University employee positions]\label{ex:univ-empl}
		Consider the following KB $\cK=(\cA,\cT)$, where $\cT$ is:
		\begin{align}
			&(1)\ \mathit{Professor} \sqcap \mathit{Tutor} \sqsubseteq \mathit{Lecturer}\notag\\
			&(2)\ \mathit{Person} \sqcap \mathit{Professor} \sqsubseteq \mathit{PhD}\notag\\
			&(3)\ \mathit{Professor} \sqcup \mathit{Tutor} \sqsubseteq \mathit{UniversityEmployee}\notag\\
			&(4)\ \mathit{Professor} \sqsubseteq \neg\mathit{Tutor}\notag
		\end{align}
		\noindent
		and $\cA$ is:
		\begin{align}
			&(5)\ \mathit{alice} : \mathit{Person}\notag\\
			&(6)\ \mathit{alice} : \mathit{Professor}\notag\\
			&(7)\ \mathit{alice} : \mathit{Tutor}\notag
		\end{align}
		The KB states that a professor which is also a tutor is a lecturer (1), that a person who is a professor is a PhD (2), and that professors and tutors are university employees (3). A professor cannot be a tutor (4). Finally, Alice is a person (5), a professor (6) and a tutor (7).
		
		It is easy to see that the ABox of this KB is inconsistent w.r.t. the TBox.
		This KB has two repairs: (I) $\cR_I=\cA\setminus\{(6)\}$ and (II) $\cR_{II}=\cA\setminus\{(7)\}$.
		The query $Q_1=\mathit{alice}:\mathit{Lecturer}$ is false under the three semantics because it is impossible to find a repair where Alice is both a professor and a tutor.
		The query $Q_2=\mathit{alice}:\mathit{PhD}$ is true under the Brave semantics because it is true in $\cR_{II}$.
		The query $Q_3=\mathit{alice}:\mathit{UniversityEmployee}$ is true under the AR semantics because it is true in every repair.
		Finally, the query $Q_4=\mathit{alice}:\mathit{Person}$ is true under the IAR semantics because it is true in the intersection of $\cR_I$ and $\cR_{II}$, that contains only axiom~$(5)$. 
	\end{exa}

	\section{Querying Inconsistent Knowledge Bases}
	\label{sec:query-inc-kb}
	
	We can use the definitions from Section~\ref{sec:prob-descr-logics} to define the probability of a query $Q$ given a (probabilistic) possibly inconsistent KB $\cK$, with $Q$ an axiom. In Section~\ref{sec:comp-repair}, we will compare our proposal with the repair semantics~\cite{DBLP:conf/rweb/BienvenuB16}.
	We use a probability value  strictly smaller than $1.0$
	to mark which axioms may be incorrect  and so can be removed to debug (or repair) the KB. 
	Under this semantics, given a KB $\cK$ and a query $Q$, the probability of $Q$ is:
	$$P_C(Q)=P(Q|\mathit{Cons})=\frac{P(Q,\mathit{Cons})}{P(\mathit{Cons})}$$
	Here, $P(\mathit{Cons})$ is the probability that the KB is consistent, i.e., the probability of the formula $\neg f_{\textsc{all-just}(\mathit{Incons},\cK)}$, while $P(Q,\mathit{Cons})$ is the probability of the formula $f_{\textsc{all-just}(Q,\cK)}\wedge\neg f_{\textsc{all-just}(\mathit{Incons},\cK)}$. 
	This represents all the \emph{consistent worlds} and checks if the query holds in each.
	The final probability is the probability of the query within the consistent worlds over the probability of the consistency of the KB.
	
	Moreover, from the definition of $P_C(Q)$, if the KB is certainly inconsistent ($P(\mathit{Cons})=0.0$), $P_C(Q)$ is not defined. This reflects the fact that, if a KB is certainly inconsistent, then every query is entailed and so the KB does not provide meaningful information. Such a KB can be seen as a KB that has no repair.

	\begin{exa}[Flying Penguins - 3]
		\label{ex:pingu-incons}
		Consider the query $Q=\mathit{pingu}:\neg\mathit{Fly}$ and the following KB (slightly different from that of Ex.~\ref{ex:pingu-prob}): 
		\setcounter{equation}{0}
		\begin{align}
			(1)\>0.9::\ &\mathit{Bird} \sqsubseteq\mathit{Fly}&&(2)\>\mathit{Penguin} \sqsubseteq\mathit{Bird}\notag\\
			(3)\>0.9::\ &\mathit{Penguin} \sqsubseteq\neg\mathit{Fly}&&(4)\>\mathit{pingu}:	\mathit{Penguin}\notag
		\end{align}
		In this case we have four different worlds:
		\begin{align*}
			w_1 =&\{(1),(2),(3),(4)\}&w_2 =&\{(1),(2),(4)\}\\
			w_3 =&\{(2),(3),(4)\}&w_4 =&\{(2),(4)\}
		\end{align*}
		World $w_1$ is inconsistent, therefore it does not contribute to the probability of $(Q,\mathit{Cons})$. Among the other worlds, the query $Q$ is true only in $w_3$, which has probability $0.9\cdot0.1=0.09$. The probability of the KB to be consistent is
		\begin{align}
			P(w_2)+P(w_3)+P(w_4)=&\ (P(1)\cdot(1-P(3)))\ +\notag\\
			&\ ((1-P(1))\cdot P(3))\ +\notag\\
			&\ ((1-P(1))\cdot(1-P(3)))\notag\\
			=&\ 0.9\cdot0.1+0.1\cdot0.9+0.1\cdot0.1\notag\\
			=&\ 0.19\notag
		\end{align}
		So, $P_C(Q)=0.09/0.19=0.474$.
		
		An interesting result can be seen if axiom (3) is non-probabilistic. In this case, the KB has the two worlds $\{(1),(2),(3),(4)\}$ and $\{(2),(3),(4)\}$, having probability $0.9$ and $0.1$, respectively. The first world is inconsistent while $Q$ holds in the second. The probability of  consistency is $P(\mathit{Cons})=0.1$ because only the second world is consistent. In this world the query is true, so the probability $P(Q,\mathit{Cons})=0.1$. As result, we obtain that $P_C(Q)=1$. On the other hand, the query $\overline{Q}=\mathit{pingu}:\mathit{Fly}$ takes probability $0$. The same results can be achieved with every value of probability of axiom (1) that is strictly lower than 1. This example shows how a correct design of the knowledge is important. Indeed, by associating a probability value to axiom (1), which is not always true in the domain, axioms that are certain acquire, in a way, more importance. The information that penguins do not fly is certain because there are no species of penguins that have the ability to fly, thus, irrespectively of the probability of axiom (1), the query $Q$ is certainly true.
	\end{exa}

	This process keeps the probability of queries w.r.t. a consistent KB unchanged. The proof of this statement is trivial because, if the KB is consistent, all the worlds are so, and therefore the computation of the query is equivalent to the former definition of the DISPONTE semantics.

	\subsection{Reasoning Algorithm}
	\label{sec:reas-alg}

	One of the most used approaches to compute justifications is the tableau algorithm~\cite{DBLP:conf/ijcai/SchlobachC03,extended_tracing,DBLP:journals/jar/HorrocksS07}. Sebastiani and Vescovi~\cite{DBLP:conf/cade/SebastianiV09} defined an approach for finding justifications in the $\mathcal{EL}$ DL that builds a Horn propositional formula of polynomial size and applies Boolean Constraint Propagation. Arif et al.~\cite{DBLP:conf/sat/ArifMM15} used implicit hitting set dualization by exploiting a SAT solver. Baader and colleagues~\cite{DBLP:journals/jar/BaaderP10,DBLP:journals/logcom/BaaderP10} presented different approaches that create a Boolean formula, called \emph{pinpointing formula}, which represents the set of all the justifications for a query w.r.t. $\mathcal{SI}$ KBs. This approach is implemented, for example, in TRILL$^P$ and TORNADO~\cite{DBLP:journals/tplp/ZeseCLBR19}, two sub-systems of TRILL. These two sub-systems are not considered in this paper because the first needs a further processing of the results to apply our approach, while the second returns results in a form not suitable for our purposes.

	We now describe the tableau approach that is used in the reasoners we extended, TRILL~\cite{DBLP:journals/tplp/ZeseCLBR19,ZesBelRig16-AMAI-IJ,DBLP:journals/ws/ZeseC21} and BUNDLE~\cite{RigBelLamZese13-RR13b-IC,CotRigZesBelLam-IC-2018}, then we describe how we can collect the justifications for both the queries and the inconsistency in order to correctly compute the probability and correctly answer the query itself.
	
	A \textit{tableau} is a graph where each node represents an individual $a$ and is labelled
	with the set of concepts $\cL(a)$ to which $a$ belongs. Each edge
	$\langle a,b\rangle$ in the graph is labelled with the set of roles $\mathcal{L}(\langle a,b\rangle)$ to
	which the couple $(a, b)$ belongs. 
	The algorithm proves an axiom by contradiction by repeatedly applying a set of consistency preserving \emph{tableau expansion rules} until a clash (i.e., a contradiction) is detected or a clash-free graph is found to which no more rules are applicable.
	A clash is a couple $(C, a)$ where $C$ and $\neg C$ are
	present in the label of the node $a$, i.e., $\{C, \neg C\} \subseteq \cL(a)$.
	
	The expansion rules modify the labels of the nodes and edges of the tableau and update a \emph{tracing function} $\tau$, which associates a set of justifications to each concept or role of a label in the tableau. For example, the value of the tracing function for the label $C$ of node $a$ when axiom $a:C$ is in the KB is set to $\{\{a:C\}\}$, i.e., a set of justifications containing one single justification that, in turns, contains the axiom $a:C$.
	Given a query, once the tableau is fully expanded, to build the justification for the query, the labels that cause clashes are collected. Then, the justifications of these labels are joined to form the justification for the query. We refer the interested reader to~\cite{Zese17-SSW-BK} for a detailed overview.

	The expansion rules %in Figure \ref{table:rules} 
	are divided into \emph{deterministic} and \emph{non-deterministic}. As stated above, the first, when applied to a tableau, produce a single new tableau. The latter, when applied to a tableau, produce a set of tableaux.

	When the tableau algorithm adds the negation of the query to the tableau,  the value of its tracing function is set to $\{\emptyset\}$, i.e., an empty justification, because that information does not come from the KB. So, if a clash is detected, it is not possible to know what causes it: an inconsistency, or the query. Thus, reasoners usually perform a consistency check before expanding the tableau, preventing its execution if the KB is inconsistent.
	This is due to the fact that, when the axioms involved in the justification of a query also cause an inconsistency,  the justifications for the query may be subsets of those for the inconsistency. In Example~\ref{ex:pingu-incons}, given the query $Q=\mathit{pingu}:\mathit{Fly}$, the single justification for the query is the set of axioms $\{(4),(2),(1)\}$, while that for the inconsistency is $\{(4),(2),(1),(3)\}$. In this case, when extracting the justifications from the values of the tracing function of the labels that create the clashes, the latter is not collected because it is a superset of the first.
	
	In order for the reasoner to collect justifications for both the query and the inconsistency, a simple approach is to change the tracing function so that it adds a placeholder to the justifications for the negation of the query, in order to separate the justifications that contain this placeholder from those without it, which are justifications caused by an inconsistency.
	Basically, the tracing function for the negation of the query is initialized as $\{\{Q_p\}\}$, where $Q_p$ is a fake axiom that does not appear in the KB. This axiom represents a flag indicating that the label has been created because of the query.
	Then, the standard expansion rules can be applied to expand the tableau in the usual way, because $Q_p$ acts like an axiom. Therefore, the tableau algorithm remains unchanged. At the end, if there are clashes, the justifications for the query will contain $Q_p$, those due to the inconsistency will not. 
	An important aspect of this implementation is that all the results about completeness and correctness of the tableau (see \cite{horrocks2006even}) still apply, since the tableau algorithm is not modified. This also possibly allows the application of this approach to every DL for which tableau expansion rules have been defined.
	
	\begin{exa}[Flying Penguins - 4]
		Consider the KB of Example~\ref{ex:pingu-incons} where the probability values from the axioms have been removed.
		\begin{align}
			(1)\>&\mathit{Bird} \sqsubseteq\mathit{Fly}&&(2)\>\mathit{Penguin} \sqsubseteq\mathit{Bird}\notag\\
			(3)\>&\mathit{Penguin} \sqsubseteq\neg\mathit{Fly}&&(4)\>\mathit{pingu}:	\mathit{Penguin}\notag
		\end{align}
		\noindent
		By using standard tableau algorithms, we cannot ask any query because the KB is inconsistent. So, let us remove axiom $(3)$ for the moment. The classic tableau algorithm creates a tableau containing a single node, corresponding to $\mathit{pingu}$, labelled with the concept $\mathit{Penguin}$, having as value of $\tau$ the set of justifications $\{\{(4)\}\}$.
		Given the query $Q=\mathit{pingu}:\mathit{Fly}$, the tableau is updated by adding the label $\neg\mathit{Fly}$ to the node of  $\mathit{pingu}$, with the value of $\tau$ equals to $\{\emptyset\}$.
		Expanding this tableau by following the axioms of the KB, the label $\mathit{Fly}$ will be added to the node for $\mathit{pingu}$, with the value for $\tau$ corresponding to $\{\{(4),(2),(1)\}\}$. In this tableau there is a clash because the node for $\mathit{pingu}$ contains both the labels  $\mathit{Fly}$ and $\neg\mathit{Fly}$. A justification can be found by joining the values of the tracing function of the two labels, i.e., $\emptyset\cup\{(4),(2),(1)\}=\{(4),(2),(1)\}$.
		
		Let us now reintroduce axiom $(3)$. During the expansion of the tableau, the value of $\tau$ for the label $\neg\mathit{Fly}$ is updated by adding the justification $\{(4),(3)\}$. However, this justification cannot be added because the initial value of the tracing function contains the empty set, which is a subset of any other set, so the tableau cannot correctly discriminate between justifications due to the query and to the inconsistency.  This explains why classic tableau algorithms require consistent KBs.

		If we consider this example with the new tracing function, the value of $\tau$ for the label of the query $\neg\mathit{Fly}$ is initialized as $\{\{Q_p\}\}$. Therefore, the justifications for $Q$ will be $\{\{Q_p,(4),(2),(1)\}\}$, while those for the inconsistency will be $\{\{(4),(2),(1),(3)\}\}$. In this case, during the expansion of the tableau, the value of $\tau$ for the label $\neg\mathit{Fly}$ can be updated by adding the justification $\{(4),(3)\}$, because the justification due to the inconsistency is no more a superset of $\{Q_p\}$, which is due to the query, and the reasoner can easily discriminate between the two.  Once all the justifications are collected, the reasoner implementing this tracing function can correctly answer query $Q$, returning in this case \emph{undefined} because the inconsistency derives from the axioms also required to prove the query and all the axioms are certain. Thus, differently from Example~\ref{ex:pingu-incons}, it is not possible to define whether $Q$ is true or not, nor to compute $P_C(Q)$ because $P(\mathit{Cons})=0.0$.
	\end{exa}
	
	Another possible way for the tableau algorithm to discriminate and collect justifications for queries and inconsistency  is to directly add the negation of the query to the KB by using a name known by the reasoner. For example, if the query is the axiom $a:C$, it is sufficient to add the axioms $a:C_{Q_p}$ and $C_{Q_p} \sqsubseteq \neg C$ where $C_{Q_p}$ is a fresh concept not contained in the KB. Now, it is possible to run a reasoner that can return the justifications for the inconsistency of a KB and split all the justifications in two sets, one containing justifications with axioms involving the concept $C_{Q_p}$ and one containing those with axioms not involving the fresh concept. The first set will contain the justifications for the query (in our case the axioms $a:C_{Q_p}$ and $C_{Q_p} \sqsubseteq \neg C$ must be removed from the justifications in order to collect justifications containing only axioms from the original KB), while the second will contain the justifications for the inconsistency. In this way, we do not need to modify the reasoner, nor the tracing function, making the extension to the tableau algorithm even easier to implement.

	\begin{figure}[t]
		\centering
		\includegraphics[width=\linewidth]{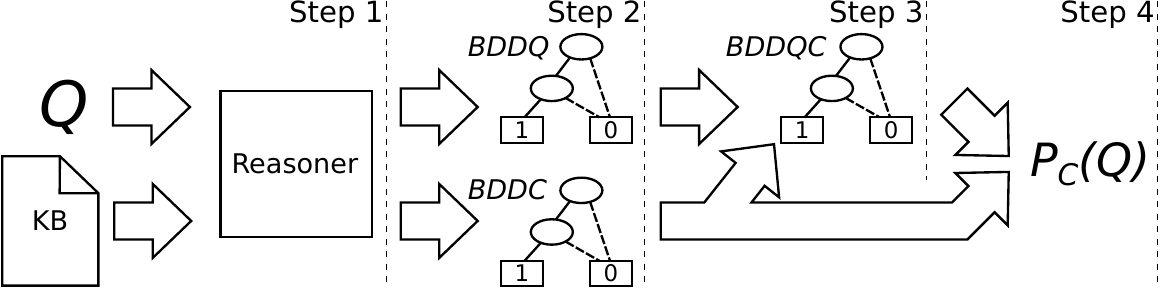}
		\caption{Reasoning flow: Step 1 computes the set of justifications for the query $Q$ w.r.t. the KB $\cK$ $\textsc{all-just}(Q,\cK)$ and for the inconsistency (if any) $\textsc{all-just}(\mathit{Incons},\cK)$. Step 2 builds the BDDs $BDDQ$ from $\textsc{all-just}(Q,\cK)$ and $BDDC$ from $\textsc{all-just}(\mathit{Incons},\cK)$. Step 3 joins $BDDQ$ and $BDDC$ creating $BDDQC$. Finally, Step 4 computes the probability $P_C(Q)$ by first computing the probabilities $P(Q,\mathit{Cons})$ from $BDDQC$ and $P(\mathit{Cons})$ from $BDDC$.\label{fig:reason-flow}}
	\end{figure}
	
	The whole reasoning flow can be divided into 4 different steps, described below, and shown in Figure~\ref{fig:reason-flow}.
	Once all the justifications are collected, the BDD $BDDQ$ for the query $Q$ is built from $\textsc{all-just}(Q,\cK)$ while that for the consistency of the KB $BDDC$ is generated by negating the BDD for $\textsc{all-just}(\mathit{Incons},\cK)$. The next step consists of joining $BDDQ$ and $BDDC$ to create a BDD $BDDQC=BDDQ\wedge BDDC$ from which the probability $P(Q,\mathit{Cons})$ can be computed. The probability $P(\mathit{Cons})$ can be computed directly from $BDDC$. Finally, the probability $P_C(Q)$ is computed from the two BDDs as $P(Q,\mathit{Cons})/P(\mathit{Cons})$.

	\begin{exa}[Probabilistic university employee positions]\label{ex:univ-empl-prob}
		Consider a KB $\cK=(\cA,\cT)$ that differs from the one of Example~\ref{ex:univ-empl} because all the ABox axioms are  probabilistic, so $\cT$ is:
		\begin{align}
			&(1)\ \mathit{Professor} \sqcap \mathit{Tutor} \sqsubseteq \mathit{Lecturer}\notag\\
			&(2)\ \mathit{Person} \sqcap \mathit{Professor} \sqsubseteq \mathit{PhD}\notag\\
			&(3)\ \mathit{Professor} \sqcup \mathit{Tutor} \sqsubseteq \mathit{UniversityEmployee}\notag\\
			&(4)\ \mathit{Professor} \sqsubseteq \neg\mathit{Tutor}\notag
		\end{align}
		\noindent
		and $\cA$ is:
		\begin{align}
			(5)\ 0.9::&\ \mathit{alice} : \mathit{Person}\notag\\
			(6)\ 0.2::&\ \mathit{alice} : \mathit{Professor}\notag\\
			(7)\ 0.8::&\ \mathit{alice} : \mathit{Tutor}\notag
		\end{align}
		The only justification for the inconsistency is $\{(6),(7),(4)\}$, so $P(\mathit{Incons})=0.2\cdot0.8=0.16$.
		Consider also a second KB $\cK'=(\cA,\cT\setminus\{(4)\})$: $\cK'$ is consistent, so every query is true under IAR semantics.
		
		Consider query $Q_1=\mathit{alice}:\mathit{Lecturer}$: it is false  w.r.t. $\cK$ under all three semantics because it is impossible to find a repair where Alice is both a professor and a tutor. The set of justifications for $Q_1$ is $\{\{(6),(7),(1)\}\}$. There is no consistent world where $Q_1$ is true  w.r.t. $\cK$, so $P_C(Q_1)=0.0$.  If we consider $\cK'$, then $\textsc{all-just}(Q_1,\cK')=\textsc{all-just}(Q_1,\cK)$ while $P_C(Q_1)=P(Q_1)=0.16$.
		The query $Q_2=\mathit{alice}:\mathit{PhD}$ is true under the Brave semantics because it is true in $\cR_{II}=\cA\setminus\{(7)\}$. Its set of justifications is $\{\{(5),(6),(2)\}\}$ and $P_C(Q_2)=0.04286$. If we consider $\cK'$, we have $\textsc{all-just}(Q_2,\cK')=\textsc{all-just}(Q_2,\cK)$, while $P_C(Q_2)=P(Q_2)=0.18$.
		Consider query $Q_3=\mathit{alice}:\mathit{UniversityEmployee}$: it is true under the AR semantics because it is true in every repair but not in their intersection. Its set of justifications is $\{\{(6),(3)\},\{(7),(3)\}\}$ and $P_C(Q_3)=0.80952$. If we consider $\cK'$, we have $\textsc{all-just}(Q_3,\cK')=\textsc{all-just}(Q_3,\cK)$, while $P_C(Q_3)=P(Q_3)=0.84$.
		Finally, the query $Q_4=\mathit{alice}:\mathit{Person}$ is true under the IAR semantics because it is true in the intersection of $\cR_I=\cA\setminus\{(6)\}$ and $\cR_{II}$, that contains only axiom~$(5)$. Its set of justifications is $\{\{(5)\}\}$ and $P_C(Q_4)=0.9$. If we consider $\cK'$, then $\textsc{all-just}(Q_4,\cK')=\textsc{all-just}(Q_4,\cK)$, while $P_C(Q_4)=P(Q_4)=0.9$ as for $\cK$.
		
		This example shows that the probability of a query does not change if the axioms causing the inconsistency are not used to explain the query. In all the other cases, the probability decreases depending on the probability of the inconsistency. Note that, if $P(\mathit{Incons})=1.0$, then the probability of any query cannot be computed because the KB is certainly inconsistent. However, since we use the probability to tell which axioms can be removed to create the repairs, $P(\mathit{Incons})=1.0$ means that the axioms causing the inconsistency are not probabilistic, so they cannot be removed from the KB to create the repairs. Therefore, in this case the query will be false under the repair semantics.
	\end{exa}

	Regarding complexity, there are two problems to consider. The first is finding justifications, whose complexity has been deeply studied and depends on the logic used~\cite{DBLP:conf/dlog/PenalozaS09,DBLP:conf/dlog/PenalozaS10,DBLP:conf/ecai/PenalozaS10}. In particular, Corollary 15 in \cite{DBLP:conf/ecai/PenalozaS10} shows that 
	finding all the justifications cannot be solved in output polynomial time for \textit{DL-Lite$_{bool}$} TBoxes unless $P = NP$.
	Since the proof of this corollary for \textit{DL-Lite$_{bool}$} uses only constructs available in \alc, this result also holds for \alc and all its extensions, making the problem of finding justifications not efficiently solvable.
	Despite these results, it has been shown that all justifications can be found over many real-world ontologies within a  few seconds \cite{Kalyanpurphd,DBLP:conf/semweb/KalyanpurPHS07}.
	
	The second problem is building the BDD and computing the probability, which can be seen as the problem of computing the probability of a sum-of-products \cite{RCDB03}. While the problem is \#P-hard\footnote{The class \#P \cite{DBLP:journals/siamcomp/Valiant79} describes counting problems associated with decision problems in NP. More formally, \#P is the class of function problems  counting the number of accepting paths of a nondeterministic Turing machine running in polynomial time.
		A prototypical \#P problem is the one of computing the number of satisfying assignments of a CNF Boolean formula.
		\#P problems have the following characteristics:
		first, a \#P problem must be at least as hard as the corresponding NP problem; second, Toda \cite{DBLP:conf/focs/Toda89} showed that a polynomial-time machine with a \#P oracle ($\mathrm{P^{\#P}}$) can solve all problems in the polynomial hierarchy PH.}, algorithms based on BDDs were able to solve problems with hundreds of thousands of variables (see e.g. the works on inference on probabilistic logic programs
	\cite{DBLP:conf/ijcai/RaedtKT07,Rig-AIIA07-IC,Rig09-LJIGPL-IJ,RigSwi10-ICLP10-IC,DBLP:journals/tplp/KimmigDRCR11,riguzzi2011pita}). Methods for weighted model counting \cite{DBLP:conf/aaai/SangBK05,DBLP:journals/ai/ChaviraD08} can be used as well to solve the \textsc{sum-of-products} problem.
	
	Note that the extensions introduced in this paper do not change the complexity of the two problems, and the original algorithms (those without the extensions) proposed for solving these two problems were shown to be able to work on inputs of real world size\footnote{E.g., NCI ontology (\url{https://ncit.nci.nih.gov/ncitbrowser/}) with 3,382,017 axioms and $\mathcal{SH}$ expressiveness, or FMA (\url{http://si.washington.edu/projects/fma}) with 88,252 axioms in the TBox and RBox and 237,382 individuals and $\mathcal{ALCOIN}(\mathbf{D})$ expressiveness.}~\cite{Zese17-SSW-BK} consisting in consistent KBs with high number of axioms.

	\section{Comparison with the Repair Semantics}
	\label{sec:comp-repair}
	In this section we compare our proposal with the repair semantics~\cite{DBLP:conf/rweb/BienvenuB16}, where the authors consider KBs where the TBox is consistent while the ABox may be inconsistent w.r.t. the TBox. 
	
	We construct the worlds by including all the TBox axioms and some axioms from the ABox, so a world differs from a repair. 
	
	As already discussed, a world is a subset of axioms of the original KB, i.e., it can be seen as a smaller KB or a sort of repair. A \emph{consistent world} is a world that is consistent. Conversely, an \emph{inconsistent world} is a world that is inconsistent.
	Since repairs correspond to all  inclusion-maximal consistent subsets of ABox axioms, they can be viewed as `possible worlds'~\cite{DBLP:journals/jair/BienvenuBG19}. A repair is an inclusion-maximal subset of the ABox that is TBox-consistent\footnote{An ABox \cA is TBox-consistent if the KB $\cK=(\cA,\cT)$ is consistent, otherwise the ABox is TBox-inconsistent.}, %therefore, with a small abuse of the notation, we can consider a repair as a KB having the entire TBox and the set of axioms from the ABox selected by the repair.
	so, given a repair, we can build a consistent KB having the entire TBox and the set of axioms from the ABox selected by the repair.
	Therefore, given $Rep(\cK)$, the set of all repairs, \cT the TBox of the KB \cK, and $\cW_\cK$ the set of all worlds, we have $\{w_{\cR_i} = \cR_i\cup\cT, \forall \cR_i\in Rep(\cK)\}\subseteq\cW_\cK$. Conversely,
	a consistent world $w$ defines a set of repairs $Rep_w(\cK) =\{\cR|$ $\cR$ is a repair, $w\subseteq\cR\cup\cT\}$.
	The next lemma follows from these definitions.
	\begin{lem}
		\label{th:w-to-r}
		Given a KB $\cK$ and a consistent world $w\subseteq \cK$ that contains all axioms from the TBox of \cK, then the set $Rep_w(\cK)$ is not empty.
	\end{lem}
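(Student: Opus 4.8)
The plan is to exhibit an explicit repair witnessing that $Rep_w(\cK)$ is non-empty, obtained by extending the ABox portion of $w$ to a maximal TBox-consistent set.

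First I would decompose the world. Writing $\cK=(\cA,\cT)$ as the set $\cA\cup\cT$ of its axioms, we have $w\subseteq\cA\cup\cT$, and since by hypothesis $\cT\subseteq w$, it follows that $w=\cT\cup\cA_w$ where $\cA_w=w\setminus\cT=w\cap\cA$ collects exactly the ABox axioms occurring in $w$; in particular $\cA_w\subseteq\cA$. Because $w$ is a \emph{consistent} world, the KB $(\cA_w,\cT)$ is consistent, which is precisely the statement that $\cA_w$ is TBox-consistent.

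Next I would extend $\cA_w$ to a repair. By definition a repair is an inclusion-maximal TBox-consistent subset of $\cA$. Since $\cA_w$ is already a TBox-consistent subset of the finite ABox $\cA$, I can greedily add ABox axioms one at a time while preserving TBox-consistency until no further axiom can be added without introducing an inconsistency; as $\cA$ is finite this terminates and yields an inclusion-maximal TBox-consistent set $\cR$ with $\cA_w\subseteq\cR\subseteq\cA$. (Equivalently one could invoke a Zorn-style maximality argument, but finiteness of the KB makes that unnecessary.) By construction $\cR$ is a repair of $\cK$.

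Finally I would check membership in $Rep_w(\cK)$. From $\cA_w\subseteq\cR$ and the trivial inclusion $\cT\subseteq\cR\cup\cT$ we obtain $w=\cT\cup\cA_w\subseteq\cR\cup\cT$, so $\cR$ satisfies the defining condition of $Rep_w(\cK)$, whence $Rep_w(\cK)\neq\emptyset$. The only point requiring any care is the decomposition step, where one must confirm that every axiom of $w$ not belonging to $\cT$ is genuinely an ABox axiom, so that $\cA_w$ is a legitimate subset of $\cA$ to which the repair notion applies; this is immediate from $w\subseteq\cK$ together with the assumption $\cT\subseteq w$. I do not expect a genuine obstacle here: the lemma reduces to the standard fact that any TBox-consistent subset of a finite ABox embeds into an inclusion-maximal one.
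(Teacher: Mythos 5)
Your proof is correct and takes essentially the same route as the paper's: decompose the consistent world into $\cT$ plus a TBox-consistent set of ABox assertions and extend the latter to an inclusion-maximal TBox-consistent subset of $\cA$, which is by definition a repair containing $w$. The paper states this extension argument more informally and supplements it with a short argument by refutation, but the underlying idea is identical; your version merely spells out the greedy extension and the finiteness needed for it to terminate.
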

	\begin{proof}
		Each world $w$ contains all the axioms from the TBox and some axioms from the ABox. If a world is consistent then we can add ABox axioms to it until  the set of ABox axioms becomes equivalent to a repair, i.e., it contains all the assertions contained in the repair. So given a consistent world, this represents one or more repairs.
		
		By refutation, suppose there exists a consistent world $w$ such that $Rep_w(\cK)$ is empty. This means that there exists a set of axioms $\cE\subseteq w$ such that it is not contained in any repair. By definition of repairs, if $\cE\not\subseteq\cR$ for every repair $\cR$, then $\cE$ is TBox-inconsistent. So, $w$ cannot be consistent.
	\end{proof}

	It is also important to note that: (1) a repair can be obtained by removing a hitting set of the justifications for the inconsistency; and (2) a consistent world that contains all TBox axioms is a subset of a repair.

	Given these definitions, we can state the following theorems.

	\begin{thm}
		\label{th:brave}
		Given a possibly inconsistent KB $\cK$, a query $Q$, and the Boolean formula $f_{\textsc{all-just}(\mathit{Incons},\cK)}$, $Q$ is true under the \emph{Brave semantics}, i.e., $\cK\models_{Brave}Q$ iff there exists at least one justification for $Q$ w.r.t. $\cK$ such that the corresponding Boolean formula $\phi$ joined with $\neg f_{\textsc{all-just}(\mathit{Incons},\cK)}$ is satisfiable.
	\end{thm}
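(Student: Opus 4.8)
The statement is a biconditional, so the plan is to prove each direction by passing through the correspondence between truth assignments of the Boolean variables $\mathbf{X}$ and worlds of $\cK$, and then through the bridge between consistent worlds and repairs supplied by Lemma~\ref{th:w-to-r}. The workhorse observation I would establish first is that the assignment turning on exactly the probabilistic axioms of a world $w$ falsifies $f_{\textsc{all-just}(\mathit{Incons},\cK)}$ if and only if $w$ contains no justification for the inconsistency, which, since $\textsc{all-just}(\mathit{Incons},\cK)$ collects every minimal inconsistent subset of $\cK$, holds exactly when $w$ is consistent. Because in the repair setting only ABox axioms are marked probabilistic while the TBox is certain, every world automatically contains the whole TBox, so the hypothesis of Lemma~\ref{th:w-to-r} is met by every consistent world.

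For the direction $\cK\models_{Brave}Q\Rightarrow\dots$, I would start from a repair $\cR$ with $(\cR,\cT)\models Q$ and set $w=\cR\cup\cT$, which is a consistent world containing all TBox axioms. Since $w$ is consistent and entails $Q$, it contains an inclusion-minimal consistent subset $\cE\subseteq w$ with $\cE\models Q$, and as inclusion-minimality does not depend on the ambient KB, $\cE\in\textsc{all-just}(Q,\cK)$. Under the assignment corresponding to $w$, the formula $\phi=f_{\cE}$ is satisfied (because $\cE\subseteq w$) and $\neg f_{\textsc{all-just}(\mathit{Incons},\cK)}$ is satisfied (because $w$ is consistent), so $\phi\wedge\neg f_{\textsc{all-just}(\mathit{Incons},\cK)}$ is satisfiable, as required.

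For the converse, I would take a justification $\cE$ for $Q$ whose formula $\phi=f_{\cE}$ conjoined with $\neg f_{\textsc{all-just}(\mathit{Incons},\cK)}$ admits a satisfying assignment, and read that assignment back as a world $w$ (all certain axioms together with the probabilistic axioms set to $1$). Satisfaction of $\phi$ forces $\cE\subseteq w$, and satisfaction of $\neg f_{\textsc{all-just}(\mathit{Incons},\cK)}$ forces $w$ to be consistent; since $w$ contains the entire TBox, Lemma~\ref{th:w-to-r} yields a repair $\cR$ with $w\subseteq\cR\cup\cT$. Monotonicity of entailment applied to $\cE\subseteq\cR\cup\cT$ and $\cE\models Q$ then gives $\cR\cup\cT\models Q$, i.e. $(\cR,\cT)\models Q$, whence $\cK\models_{Brave}Q$.

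The two monotonicity steps and the extraction of a minimal justification from a consistent entailing world are routine. The part that needs the most care is the faithful translation between the propositional and the logical pictures: concretely, that falsifying $f_{\textsc{all-just}(\mathit{Incons},\cK)}$ is genuinely equivalent to consistency of the associated world (which is exactly where the covering property of $\textsc{all-just}(\mathit{Incons},\cK)$ is used), and that the certain/probabilistic split guarantees every world contains the full TBox, so that Lemma~\ref{th:w-to-r} applies and every consistent world indeed embeds into some repair.
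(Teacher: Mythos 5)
Your proof is correct and follows essentially the same route as the paper's: the correspondence between truth assignments and worlds, the observation that falsifying $f_{\textsc{all-just}(\mathit{Incons},\cK)}$ characterizes consistency of the associated world, and Lemma~\ref{th:w-to-r} to pass from consistent worlds to repairs. The only organizational difference is that you prove the direction $\cK\models_{Brave}Q\Rightarrow{}$satisfiability directly by extracting a minimal justification from the world $\cR\cup\cT$, whereas the paper handles that direction contrapositively and more tersely; your version also makes explicit the covering property and the fact that every world contains the full TBox, both of which the paper uses implicitly.
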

	\begin{proof}
		From Section~\ref{sec:prob-descr-logics}, a justification identifies a set of worlds. The smallest one is the world containing the axioms in the justification. The set of justifications $\textsc{all-just}(\mathit{Incons},\cK)$ is represented by the formula $f_{\textsc{all-just}(\mathit{Incons},\cK)}$. Given a satisfying truth assignment of the variables of $f_{\textsc{all-just}(\mathit{Incons},\cK)}$, it is possible to build a corresponding set of axioms
		$\cE$ consisting of all those axioms $E_i$ whose corresponding variable $X_i$ is given value true in the assignment. These axioms are a justification for the inconsistency of the KB. Therefore, they define a set of worlds such that each world is inconsistent.	
		Analogously, from a satisfying truth assignment of the variables of the formula $\neg f_{\textsc{all-just}(\mathit{Incons},\cK)}$ it is possible to obtain a consistent world. Therefore, if there is at least one justification for $Q$ w.r.t. $\cK$ with $\phi$ its representation as Boolean formula, from a satisfying truth assignment of the variables of $\phi\wedge\neg f_{\textsc{all-just}(\mathit{Incons},\cK)}$ it is possible to create at least one consistent world w.r.t. the TBox of \cK where $Q$ is true. From Lemma~\ref{th:w-to-r} there exists at least one repair $\cR$ such that $(\cR,\cT)\models Q$. So, $\cK\models_{Brave}Q$.  Basically, this consists in finding all the \emph{consistent worlds} and checking if $Q$ holds in at least one of these worlds.
		
		If the formula $\phi \wedge \neg f_{\textsc{all-just}(\mathit{Incons},\cK)}$ is not true for every $\phi$,  it is not possible to find a consistent world where the query $Q$ is true. Hence, $Q$ is false in every consistent world and thus, from Lemma~\ref{th:w-to-r}, it is so in every repair. So, $\cK\not\models_{Brave}Q$.
	\end{proof}
	
	To compare our semantics with the AR semantics, we first need to give three more definitions~\cite{DBLP:journals/jair/BienvenuBG19}.
	\begin{defi}[Conflict~\cite{DBLP:journals/jair/BienvenuBG19}]
		A \emph{conflict} of $\cK = (\cA,\cT)$ is an inclusion-minimal subset of $\cA$ that is inconsistent together with $\cT$. The set of conflicts of $\cK$ is denoted $\mathit{confl}(\cK)$.
	\end{defi}
	Each conflict corresponds to a justification for the inconsistency that contains\linebreak the assertions of the conflict and some axioms from $\cT$, i.e., given a justification $j\in$\linebreak$\textsc{all-just}(\mathit{Incons},\cK)$, if we remove from the justification the terminological axioms we obtain the conflict $\{E_i | E_i \in j\cap\cA\}$.
	
	\begin{defi}[Set of conflicts of a set of assertions]
		Given $\cK = (\cA,\cT)$, the \emph{set of conflicts of a set of assertions} $\cC\subseteq\cA$, denoted as $\mathit{confl}(\cC,\cK)$ is
		\begin{align}
			\mathit{confl}(\cC,\cK)=&\{\cB|\cB\in\mathit{confl(\cK)},\cC\cap\cB\neq\emptyset\}\notag
		\end{align}
	\end{defi}
	
	\begin{defi}[Cause~\cite{DBLP:journals/jair/BienvenuBG19}]
		A \emph{cause} for a Boolean query $Q$ in a KB $\cK = (\cA,\cT)$ is an inclusion minimal subset $\cC\subseteq\cA$ consistent with $\cT$ such that $(\cC,\cT) \models Q$. We use $\mathit{causes}(Q,\cK)$ to refer to the set of causes for $Q$ in \cK.
	\end{defi}
	Given a query $Q$, each cause corresponds to a justification for the query $Q$ that contains the assertions of the cause and some axioms from $\cT$, i.e., given a justification $j\in\textsc{all-just}(Q,\cK)$, if we remove from the justification the terminological axioms we obtain the cause $\mathit{cause}(j)=\{E_i | E_i\in j\cap\cA\}$. Thus, from the set $\textsc{all-just}(Q,\cK)$ we can define the set of causes of a set of justifications.
	\begin{defi}[Set of causes of the set of justifications]
		Given $\cK = (\cA,\cT)$, a query $Q$ and its set of justifications $\textsc{all-just}(Q,\cK)$, a \emph{set of causes of the set of justifications} $\mathit{cause}(\textsc{all-just}(Q,\cK))$ is defined as
		$$\mathit{cause}(\textsc{all-just}(Q,\cK)) = \{\mathit{cause}(j)|j\in\textsc{all-just}(Q,\cK)\}$$
	\end{defi}
	
	\begin{thm}
		\label{th:ar}
		This theorem derives from Theorem 4.11 and Remark 4.12 of \cite{DBLP:journals/jair/BienvenuBG19}. Given a possibly inconsistent KB $\cK$, a query $Q$, and the Boolean formulae $f_{\textsc{all-just}(\mathit{Incons},\cK)}$ and $f_{\textsc{all-just}(Q,\cK)}$,	
		$Q$ is not true under the \emph{AR semantics}, i.e., $\cK\not\models_{AR}Q$ iff $f_{\neg \textsc{all-just}(Q,\cK)}\wedge \neg f_{\textsc{all-just}(\mathit{Incons},\cK)}$ is satisfiable, where the formula $f_{\neg \textsc{all-just}(Q,\cK)}$ is defined as 
		$$f_{\neg \textsc{all-just}(Q,\cK)} = f_{\neg \textsc{all-just}(Q,\cK)}^1\bigwedge f_{\neg \textsc{all-just}(Q,\cK)}^2$$
		where, with $X_{\cC,\cB}$ new Boolean variables representing the different ways of contradicting $\cC$,
		$$f_{\neg \textsc{all-just}(Q,\cK)}^1=
		\bigwedge_{\cC\in\mathit{causes}(Q,\cK)}\bigvee_{\cB\in\mathit{confl}(\cC,\cK)}X_{\cC,\cB}$$
		and, with $\mathit{vars}(f)$ the set of variables appearing in $f$,
		$$f_{\neg \textsc{all-just}(Q,\cK)}^2=\bigwedge_{X_{\cC,\cB}\in\mathit{vars}(f_{\neg \textsc{all-just}(Q,\cK)}^1)}\bigwedge_{\beta\in\cB\setminus\cC}\neg X_{\cC,\cB}\vee X_\beta$$
		
	\end{thm}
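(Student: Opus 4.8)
The plan is to first translate AR non-entailment into a purely combinatorial statement about repairs and causes, and then to verify that the displayed Boolean formula is a faithful propositional encoding of that statement, drawing on Theorem~4.11 and Remark~4.12 of \cite{DBLP:journals/jair/BienvenuBG19} for the underlying correspondence. The starting observation is the cause-based characterization of entailment: for any repair $\cR$, one has $(\cR,\cT)\models Q$ iff $\cR$ contains some cause $\cC\in\mathit{causes}(Q,\cK)$ as a subset (by monotonicity in one direction and inclusion-minimality of causes in the other). Consequently $\cK\not\models_{AR}Q$ holds iff there is a repair $\cR$ containing no cause, i.e.\ every $\cC\in\mathit{causes}(Q,\cK)$ has at least one assertion missing from $\cR$. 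The whole argument then reduces to showing that such a \emph{cause-avoiding} repair exists iff $f_{\neg \textsc{all-just}(Q,\cK)}\wedge\neg f_{\textsc{all-just}(\mathit{Incons},\cK)}$ is satisfiable, using the already-established facts that each conflict is the ABox part of a justification for the inconsistency and each cause is the ABox part of a justification for $Q$.

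For the forward direction I would take a cause-avoiding repair $\cR$ and read off the assignment that sets $X_i$ to true exactly for the axioms kept in $\cR$ (all TBox axioms being present); consistency of $\cR$ then gives $\neg f_{\textsc{all-just}(\mathit{Incons},\cK)}$. For each cause $\cC$ I pick a missing assertion $\alpha\in\cC\setminus\cR$; since $\cR$ is inclusion-maximal, $\cR\cup\{\alpha\}$ is inconsistent and so contains a conflict $\cB$ with $\alpha\in\cB$ and $\cB\setminus\{\alpha\}\subseteq\cR$. As $\alpha\in\cC\cap\cB$ we get $\cB\in\mathit{confl}(\cC,\cK)$, so setting $X_{\cC,\cB}$ true satisfies the corresponding disjunct of $f_{\neg \textsc{all-just}(Q,\cK)}^1$; and since every $\beta\in\cB\setminus\cC$ is distinct from $\alpha$ and hence lies in $\cB\setminus\{\alpha\}\subseteq\cR$, the variable $X_\beta$ is true, which is exactly what $f_{\neg \textsc{all-just}(Q,\cK)}^2$ demands.

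For the backward direction I would start from a satisfying assignment, form the world $w$ of the axioms whose variables are true, use $\neg f_{\textsc{all-just}(\mathit{Incons},\cK)}$ to conclude that $w$ is consistent, and invoke Lemma~\ref{th:w-to-r} to extend it to a repair $\cR\supseteq w$. Then for each cause $\cC$ the conjunct $f_{\neg \textsc{all-just}(Q,\cK)}^1$ supplies a conflict $\cB$ with $X_{\cC,\cB}$ true, and $f_{\neg \textsc{all-just}(Q,\cK)}^2$ forces $\cB\setminus\cC\subseteq w\subseteq\cR$. If $\cC$ were contained in $\cR$, then $\cB=(\cB\cap\cC)\cup(\cB\setminus\cC)\subseteq\cR$ and $\cR$ would be inconsistent, a contradiction; hence no cause is contained in $\cR$, giving $(\cR,\cT)\not\models Q$ and thus $\cK\not\models_{AR}Q$.

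The main obstacle I anticipate is getting the role of $f_{\neg \textsc{all-just}(Q,\cK)}^2$ exactly right and making it do its job in both directions. Its purpose is to guarantee that, when a conflict $\cB$ is selected to break a cause $\cC$, all the companion assertions $\cB\setminus\cC$ are retained, so the assertion actually removed from $\cB$ must come from within $\cC$ — and, crucially, that this property survives when the consistent world $w$ is completed to a full repair. Establishing that the two factors of the formula jointly correspond to the encoding of Theorem~4.11/Remark~4.12 of \cite{DBLP:journals/jair/BienvenuBG19}, with the $X_{\cC,\cB}$ variables interpreted as the admissible ways of contradicting each cause, is where the bookkeeping must be handled carefully; the remainder is a routine bidirectional reading between satisfying assignments and repairs.
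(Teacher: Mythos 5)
Your proposal is correct and follows essentially the same route as the paper's proof, which itself only sketches the argument by deferring to Theorem~4.11 and Remark~4.12 of Bienvenu et al.: both arguments read $\neg f_{\textsc{all-just}(\mathit{Incons},\cK)}$ as carving out the consistent worlds, interpret $f^1$ as ``every cause is contradicted'' and $f^2$ as ``the rest of the chosen conflict is retained,'' and invoke Lemma~\ref{th:w-to-r} to pass between consistent worlds and repairs. You simply supply the bookkeeping (the maximality argument producing the conflict $\cB$ with $\cB\setminus\{\alpha\}\subseteq\cR$, and the containment $\cB=(\cB\cap\cC)\cup(\cB\setminus\cC)\subseteq\cR$ yielding the contradiction) that the paper leaves implicit.
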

	
	\begin{proof}
		The proof follows the same steps of \cite[Theorem 4.11 and Remark 4.12]{DBLP:journals/jair/BienvenuBG19}, $\neg f_{\textsc{all-just}(\mathit{Incons},\cK)}$ represents the set of consistent worlds,  and so the set of repairs, while $f_{\neg \textsc{all-just}(Q,\cK)}$ the set of repairs where $Q$ is not true. It is built in two steps: the first builds the formula $f_{\neg \textsc{all-just}(Q,\cK)}^1$ representing that every cause is contradicted, and the second builds the formula $f_{\neg \textsc{all-just}(Q,\cK)}^2$ ensuring that when a cause is contradicted, every axiom of the conflict not belonging to $\cC$ is present. If the formula $f_{\neg\textsc{all-just}(Q,\cK)}\wedge\neg f_{\textsc{all-just}(\mathit{Incons},\cK)}$ has an assignment of the Boolean variables that satisfies it, then there is a repair where the query is not true.
		
		If the formula $f_{\neg\textsc{all-just}(Q,\cK)}\wedge\neg f_{\textsc{all-just}(\mathit{Incons},\cK)}$ is not satisfiable, then it is not possible to find any consistent  world contradicting the formula. Hence, $Q$ is true in every consistent world and thus, from Lemma~\ref{th:w-to-r}, it is so in every repair. So, $\cK\models_{AR}Q$.
	\end{proof}
	
	\begin{thm}
		\label{th:iar}
		Given a possibly inconsistent KB $\cK$, a query $Q$, and the set $\cE$ of all the ABox axioms that appear in at least one justification for the inconsistency, $Q$ is true under the \emph{IAR semantics}, i.e., $\cK\models_{\mathit{IAR}}Q$, iff there exists at least one justification for $Q$ w.r.t. $\cK$ such that none of its axioms belongs to $\cE$.
	\end{thm}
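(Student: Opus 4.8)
The plan is to prove both directions of the biconditional by connecting the syntactic condition on justifications to the semantic definition of IAR, which requires that $Q$ hold in the intersection $\cD = \bigcap_{\cR \in \mathit{Rep}(\cK)} \cR$ of all repairs (together with the TBox). The key observation I would exploit is the characterization already hinted at in the excerpt: an ABox axiom survives in $\cD$ precisely when it belongs to \emph{every} repair, and by item (1) of the remark preceding the theorems, repairs are obtained by removing hitting sets of the justifications for the inconsistency. Consequently, an ABox axiom lies in the intersection of all repairs \emph{if and only if} it participates in no justification for the inconsistency, i.e., if and only if it does not belong to the set $\cE$ defined in the statement. This gives the clean identity $\cD = \cA \setminus \cE$, which I would establish first as the crucial bridge.

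\textbf{Forward direction ($\Leftarrow$).} Suppose there is a justification $\cJ'$ for $Q$ such that none of its ABox axioms lies in $\cE$. Since every axiom of $\cJ' \cap \cA$ avoids $\cE$, by the identity above each of these axioms belongs to $\cD = \cA \setminus \cE$. A justification is by definition a \emph{consistent} inclusion-minimal subset entailing $Q$, so $\cJ' \models Q$ and $\cJ' \subseteq \cD \cup \cT$. Monotonicity of entailment then yields $(\cD,\cT) \models Q$, which is exactly $\cK \models_{\mathit{IAR}} Q$. Here I would be careful to note that the TBox axioms of $\cJ'$ cause no trouble, since the TBox is assumed consistent and is included in every repair and in $(\cD,\cT)$ by definition.

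\textbf{Reverse direction ($\Rightarrow$).} Conversely, assume $\cK \models_{\mathit{IAR}} Q$, i.e., $(\cD,\cT) \models Q$ with $\cD = \cA \setminus \cE$. Since $(\cD,\cT)$ is consistent (it is contained in every repair) and entails $Q$, there must exist an inclusion-minimal subset $\cC \subseteq \cD$ consistent with $\cT$ such that $(\cC,\cT) \models Q$; this $\cC$ together with the relevant TBox axioms forms a justification for $Q$ whose ABox part is $\cC \subseteq \cD = \cA \setminus \cE$, hence disjoint from $\cE$. Thus a justification for $Q$ avoiding $\cE$ exists. The only subtlety is guaranteeing that the justification extracted from the entailment $(\cD,\cT)\models Q$ has its \emph{entire} ABox portion inside $\cD$; this follows because $\cC$ is drawn from $\cD$ by construction, and minimization only removes axioms.

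\textbf{Main obstacle.} I expect the crux to be rigorously establishing the identity $\cD = \cA \setminus \cE$, which rests on the claim that an ABox assertion is removed from some repair exactly when it occurs in a justification for the inconsistency. The $\subseteq$-style reasoning (an axiom in a justification for inconsistency can be excluded by a suitable hitting set, so it is absent from at least one repair) is straightforward, but the converse (an axiom occurring in \emph{no} inconsistency justification is forced into \emph{every} repair, by maximality) needs the inclusion-maximality of repairs together with the fact that such an axiom can never be part of a minimal inconsistent set. Once this characterization is secured, both directions of the theorem reduce to routine applications of monotonicity and the definition of a justification.
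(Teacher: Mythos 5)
Your proof is correct and follows essentially the same route as the paper's: both hinge on the identity $\cD=\bigcap_{\cR\in Rep(\cK)}\cR=\cA\setminus\cE$ and then reduce the two directions to monotonicity and the minimality of justifications. If anything, your write-up is slightly more complete, since you argue the sufficiency direction (justification avoiding $\cE$ implies $\cK\models_{\mathit{IAR}}Q$) explicitly, whereas the paper only spells out the necessity direction by contraposition and leaves the converse implicit.
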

	\begin{proof}
		By definition, the justifications for the inconsistency of a KB $\cK$ tell which combinations of axioms cause the inconsistency. Speaking in terms of repairs, given the set of the justifications for the inconsistency, each justification defines a conflict. For the definition of repairs, every repair will be defined as the KB $\cK$ to which at least one of the axioms in each justification for the inconsistency have been removed. Let us define the intersection of all the repairs as $\cD=\bigcap_{\cR\in Rep(\cK)}\cR=\cK\setminus\cE$, where $\cE$ is the set of all axioms contained in the justifications for the inconsistency. For $Q$ to be true in $\cD$ it is necessary that there is at least one justification $j$ for $Q$ w.r.t. $\cK$ such that $j\cap\cE=\emptyset$.
		
			Suppose that every justification $j_i$ for $Q$ in \cK contains at least one axiom from \cE, i.e., $\forall i:j_{i}\cap\cE\not=\emptyset$. Pick an axiom $\alpha\in j_{i}\cap\cE$,
			this axiom cannot be in the intersection of the repairs $\cD$, so $j_{i}$ is no longer a justification for $Q$ in \cD. 
			Since this is true for all justifications for $Q$, the query has no justification in the intersection of the repairs, so $\cK\not\models_{\mathit{IAR}} Q$
		\end{proof}

		It is worth noting that our approach, differently from the Brave/AR/IAR repair semantics, allows to more finely tailor the KB to the domain by choosing which axioms can be removed from the KB to build a repair and which axioms cannot. If we consider all the TBox axioms as certain, we restrict ourselves to the case of fixed TBox, if we add a probability value to all the axioms of the TBox, we consider the case where the TBox can also be repaired, while if we add a probability to some axioms only we work in an intermediate setting. This approach shares similarities with the Generalized Repair (GR) semantics~\cite{10.5555/3032027.3032070}, where a KB is split into two parts, a soft part containing axioms (actually facts or tuple-generating dependencies and negative constraints in the paper) that can be removed to build repairs, and a hard part that cannot be removed. A generalized repair is a KB containing the hard part and a maximal subset of axioms of the soft part such that the KB is consistent and the addition of any other axiom from the soft part makes the KB inconsistent. A query holds under the GR semantics if it holds in every generalized repair, similarly to the AR semantics. By using DISPONTE, we use probabilities to choose which axioms can be removed from the KB to repair it. In the sense of the GR semantics, we use probability to discriminate hard from soft axioms. The paper presenting the GR semantics also proposed the Local Generalized Repair (LGR) semantics where the tuple-generating dependencies (TGDs) and negative constraints (NCs) are considered ground when creating repairs. In this way, to build a repair, it is possible to remove both fact and ground TGDs/NCs from the KB without removing the whole set of TGD/NC. This approach cannot be applied by directly using DISPONTE.
		
		Irrespectively of whether the axioms are probabilistic or not, Theorems~\ref{th:brave}, \ref{th:ar}, and \ref{th:iar} still hold. Suppose that we consider every axiom of the KB as possible cause of inconsistency, if an axiom does not appear in any justification for the inconsistency, then it appears in every repair. On the other hand, if an axiom is contained in a justification for the inconsistency, then there will be at least one repair that does not contain it. Since in our proofs we consider only the axioms in the justifications, they hold irrespectively of the types of axioms considered.

		Moreover, the results using the Boolean formulae can be checked by means of BDDs. For example, given the set of justifications for the inconsistency, we can find the Boolean formula $f_{\textsc{all-just}(\mathit{Incons},\cK)}$ representing it and compile the formula into a BDD $BDDI$. To find $\neg f_{\textsc{all-just}(\mathit{Incons},\cK)}$, it is sufficient to negate  $BDDI$. Similarly, to see if a formula is not satisfiable, once it is compiled into a BDD, it is sufficient to see whether the BDD is equal to the $0$ leaf. We can use the BDDs of the justifications to compile the formulas in the theorems of this section.
		Therefore, our approach features some desirable characteristics: it can handle every DL language equipped with a reasoner able to return justifications; it provides two different tools able to reason under the repair semantics whilst computing the justifications and the probabilities of queries and inconsistency, making the results more informative; it directly works on DL KBs (it uses DL constructs such as annotations), without resorting to DBMS or converting to different languages.
		
		\section{Related Work}
		\label{sec:related}

		There are several lines of work on the subject of reasoning in the case of inconsistency. In this section we will first analyse approaches that consider certain KBs. Next, we will consider approaches that exploit uncertainty, grouping them into proposals based on probabilistic and possibilistic semantics. Finally, we will consider approaches that use weighted information to guide the reasoning process with inconsistent KBs.
		
		\subsection{Certain Inconsistency Management}
		For example, the standard DL semantics can be extended by the definition of a Four-Valued Logic, where classical implication is used together with two other types of implication of greater strength~\cite{DBLP:conf/dlog/MaHL08}.
		Their definition is given in terms of projections of concepts, i.e., every concept in a KB is associated with two sets of domain elements containing those known to belong to the concept and those known to belong to its complement. These two sets can overlap. Each implication can be translated into classical DL sub-class implications in a pre-processing step of the KB. 
		Another possible semantics introduces different types of negations, as in~\cite{DBLP:journals/ijar/ZhangXLB14}, where two types of negation are considered to handle information known to be false and information that is not known to be true. However, these approaches force the developers of the KBs to distinguish the different versions of implications or negations, which may be not intuitive for those who are not expert of logics. Changing the standard syntax and semantics can also bring compatibility issues with other KBs and the impossibility of using standard reasoners.

		Another approach considers the \emph{repairs} of an inconsistent KB. As already discussed, the repairs are parts (inclusion-maximal subsets) of the assertional axioms of the (inconsistent) KB that are consistent with the terminological axioms. These represent the possible ways of repairing an inconsistent KB by preserving as much information as possible (in the sense of set inclusion) to obtain a consistent KB. There could be many different repairs, depending on how many assertional axioms cause the inconsistency. There are several ways to build repairs, e.g., Baader et al. \cite{baader2021computing} look for optimal repairs where the least number of consequences is removed w.r.t. $\mathcal{EL}$ KBs, and several semantics based on the repairs, making the inference process tolerant to inconsistency. A query is true w.r.t. the inconsistent KB if, 
		for example, it is true in every repair of the KB (AR semantics~\cite{DBLP:conf/rr/LemboLRRS10}), in the intersection of the repairs (IAR semantics~\cite{DBLP:conf/rr/LemboLRRS10}), or in at least one repair (Brave semantics~\cite{DBLP:conf/ijcai/BienvenuR13}).
		A comprehensive introduction to repairs can be found in~\cite{DBLP:conf/rweb/BienvenuB16}. 
		
		One of the most prominent ways to answer queries under the AR semantics reduces the problem to SAT~\cite{DBLP:journals/jair/BienvenuBG19,DBLP:conf/kr/BienvenuB22}. In particular, ORBITS~\cite{DBLP:conf/kr/BienvenuB22} exploits justifications to build a conflict graph and sets of possible answers for the query that are analysed by a SAT solver to check Brave, AR or IAR semantics. A possibility of avoiding the use of a SAT solver and building the repairs is to iteratively select a subset of the (inconsistent) KB until the subset entails the query~\cite{fang2010reasoning}. This approach is comparable to the Brave semantics. Ludwig and Pe{\~{n}}aloza~\cite{DBLP:conf/jelia/LudwigP14} proposed to precompile and save all the repairs (possibly exponentially many) to answer subsumption queries w.r.t. TBox in $\mathcal{EL}$, thus without considering individuals in the ABox. Answering queries under Brave, AR, IAR and other semantics is efficient if all the repairs are precompiled. They also proposed a second, and more efficient, way: labelling every subsumption axiom with the set of repairs that contain it. This requires polynomial space without effectively building the repairs, which can be read using a directed acyclic graph.
		Other possible repair semantics are generalized and locally generalized repair semantics~\cite{10.5555/3032027.3032070}, where, as discussed in the previous section, it is possible to define each fact as soft or hard, either tuple-generating dependency (TGD) and negative constraint (NC) for generalized repairs, and ground TGD and ground NC for locally generalized repairs. A repair contains the hard part of the KB and a maximal set of soft facts, and (ground) TGDs and NCs. A query is true under these two semantics if it is true in every repair.
		
		Other approaches translate the KB into logic programming clauses~\cite{DBLP:conf/aike/FiorentinoGMT19,DBLP:journals/aai/GomezCS10,DBLP:conf/aaai/DuWS15}, performing inference through argumentation or abduction. 
		
		\subsection{Probabilistic Inconsistency Management}
		Even though uncertainty is pervasive in real world domains, there is a lack of proposals that exploit probabilistic information. 
		A preliminary idea was presented in \cite{DBLP:conf/ecai/CeylanLP16,DBLP:conf/rweb/CeylanL18} where the authors consider probabilistic Datalog$\pm$ ontologies and repairs that can be built by removing terminological axioms
		from the original KB as well, i.e., the inconsistency may also come from TBox axioms. They consider the possible world semantics, which corresponds to DISPONTE, provide two approaches to compute the probability of queries and complexity results but not a system for performing such type of inference. The first approach, differently from ours, considers only consistent worlds. This is done by assigning probability 0 to all the inconsistent worlds and re-normalizing the probabilities of consistent worlds by putting all the probability mass on consistent worlds. Then, the probability of a query is the sum of the probabilities of the worlds where the query holds. It is important to note that this semantics assumes that the error is in the probability distribution and modifies it to ``remove'' this error. However, as argued by the authors, this may lead to unexpected results. As a possible example, consider an inconsistent KB containing the axiom $0.6::a:C$ not related to axioms causing the inconsistency, and this is the only axiom in the whole KB that ensures that $a$ belongs to $C$. If we ask for the probability of $a$ to belong to $C$, we expect to obtain $0.6$. However, with this semantics we may obtain a probability larger than $0.6$. For this reason, this semantics has been updated in \cite{DBLP:conf/ecai/CeylanLP16} and further analysed in \cite{DBLP:conf/rweb/CeylanL18} to consider generalized repairs. The probability of the query is computed as the sum of the probabilities of the worlds in which the query is true in every generalized repair that can be built from the world. This second approach is closely related to DISPONTE as we are based on a similar probabilistic semantics. The main difference is in the way the probability is computed: we do not need to find every world and every repair of the consistent worlds. Moreover, we propose different implementations able to compute the probability.
		
		A similar approach from the same authors \cite{DBLP:conf/dlog/CeylanP14} associates a KB with a Bayesian network with variables $V$. 
		Axioms take the form $E:X=x$ where $E$ is a DL axiom and $X=x$ is an annotation with $X\subseteq V$ and $x$ a set of values for these variables.
		The Bayesian network assigns a probability to every assignment of $V$, called a world. Then, the probability of a query $Q=E:X=x$ is given by the sum of the probabilities of the worlds where $X=x$ is satisfied and where $E$ is a logical consequence of the theory composed of the axioms whose annotation is true in the world.
		This approach was applied to extend the $\mathcal{EL}$ DL in \cite{DBLP:conf/sum/CeylanP15,DBLP:conf/ore/CeylanMP15,DBLP:journals/jar/CeylanP17}, defining the probabilistic DL called $\mathcal{BEL}$. 
		The system BORN \cite{DBLP:conf/ore/CeylanMP15} answers probabilistic subsumption queries w.r.t. $\mathcal{BEL}$ KBs. It exploits ProbLog \cite{DBLP:conf/ijcai/RaedtKT07} 
		for managing the probabilistic part of the KB. 
		DISPONTE is a special case of these semantics where 
		every axiom $E_i:X_i=x_i$ is such that $X_i$ is a single Boolean 
		variable and the Bayesian network has no edges, i.e., all the variables
		are independent. This is an important special case that greatly simplifies
		reasoning, as computing the probability of the worlds takes a time linear
		in the number of variables.
		However, in case the added expressiveness of these formalisms is needed,
		the Bayesian network could be translated into an equivalent one with only 
		mutually unconditionally independent random variables. Moreover, under this semantics, if a KB is inconsistent, then by definition all entailments hold with probability~1~\cite{DBLP:journals/jar/CeylanP17}.
		
		Other approaches consider completely different logics, e.g., in~\cite{DBLP:conf/dali/BilkovaFMN20} the authors propose a probabilistic logic based on the probabilistic extensions of the Belnap-Dunn logic~\cite{belnap2019computer} combined with a bilattice logic, or an extension of Lukasiewicz's logic~\cite{DBLP:journals/jair/Lukasiewicz99}, defining a two-layer modal logical framework to account for belief in agent systems working on possibly inconsistent probabilistic information. However, these logics are different from DLs, thus it is not possible to make a meaningful comparison and they cannot be applied to the KBs connected in the open linked data cloud. Potyka and Thimm defined reasoning on linear probabilistic KBs \cite{DBLP:journals/ijar/PotykaT17}, where the uncertain knowledge in the KB is represented as linear probabilistic constraints, covering different logical formalisms such as Nilsson's logic~\cite{DBLP:journals/ai/Nilsson86} or Lukasiewicz's logic~\cite{DBLP:journals/jair/Lukasiewicz99}. In this approach, a so called \emph{generalized} model is used to allow probabilistic entailment on inconsistent KBs. A generalized model is defined as a probability distribution that minimally violates the KB. So, the probability values associated with the constraints must be carefully chosen in order not to violate the probability distribution. Moreover, since the violation is computed on the constraints, a second KB must be added that contains a consistent set of integrity constraints. Differently, DISPONTE considers each axiom as independent so the assignment of a probability value can be done independently for each axiom. Koch and Olteanu~\cite{DBLP:journals/pvldb/KochO08} consider probabilistic databases and apply a \emph{conditioning} operation that removes possible worlds not fulfilling given conditions, where conditions are a kind of constraints. This is similar to building repairs. Then, the query is asked w.r.t. the reduced database, and a confidence computation will return Bayesian conditional probabilities w.r.t. the original database.  This probability computation is similar to that of Ceylan \textit{et al.} \cite{DBLP:conf/ecai/CeylanLP16,DBLP:conf/rweb/CeylanL18} with the difference that the probabilities of consistent worlds are not modified when removing the inconsistent worlds from the probability computation. %Both steps, conditioning and confidence computation, are NP-hard problems, as reported in~\cite{DBLP:journals/pvldb/KochO08}, while our approach is \#P-hard, as we discussed in the Section~\ref{sec:reas-alg}.
		Tammet et al.~\cite{DBLP:conf/cade/TammetDJ21} consider FOL, a semantics based on degrees of belief, and present an algorithm with various steps: they first calculate the decreasing confidence by means of a modified resolution search collecting different answers and justifications. Next the justifications are combined by means of the cumulation operation. Finally, the algorithm collects negative evidence for all the answers obtained so far, separately for each individual answer. This search is also split into resolution and cumulation. The search may not terminate, so a time limit is imposed, and the answer may not be complete. While this approach uses justifications, the main difference with our approach is that, differently from DISPONTE, the degrees of belief used by Tammet et al. are treated as confidences and not as probability values in the computation of the confidence of the query. Therefore, it does not define a probability distribution, so the results can take values larger than 1.0.
		Both Koch and Olteanu, and Tammet et al.'s proposals share similarities with our approach, the first in the use of the probabilistic semantics, the latter in the search of two sets of justifications.

		\subsection{Weighted Inconsistency Management}
		Another  possible solution for managing inconsistent KBs is to use different types of weights to guide the inference, mainly defining some measure of priority or importance on the information in the KB.
		Some approaches propose the use of priority levels or weights \cite{DBLP:conf/aaai/BienvenuBG14} to select repairs with higher priority/weight first or use weights to stratify the KBs  and build sub-ontologies by keeping as many axioms with higher weights as possible \cite{DBLP:conf/rweb/QiD12}.  To facilitate the parameter assignment task, one can exploit modularization-based approaches (such as~\cite{DBLP:conf/aswc/SuntisrivarapornQJH08}) to find local modules and assign the parameters considering only axioms in individual modules. More in general, another way of facilitating the work of knowledge engineers, especially in very large KBs, could be to consider the reliability of data source, associating a confidence level to each data source and use this value as a weight for the information extracted from it. For example, we could extract axioms from, e.g., DBPedia and SNOMED CT, about a certain disease and associate to all the axioms extracted from SNOMED CT a higher probability than those extracted from DBPedia, as we consider the first KB more reliable. This approach to associate weights/probability values to axioms can also be used with the majority of the approaches presented in this section, with DISPONTE among them, which however can also exploit EDGE~\cite{RigBelLamZese13-RR13a-IC,CotZesBel15-ILP-IC} or LEAP~\cite{RigBel14-URSWa-BC,CotZes15-AIIADC-IW}, two tools for automatically learning parameters of a DISPONTE KB by exploiting data contained in it.
		
		\subsection{Inconsistency Measures}
		From a different point of view, the use of probability to measure the inconsistency of a KB is related to the plethora of inconsistency measures defined in literature~\cite{thimm2018evaluation,DBLP:journals/ai/BesnardG20}. An \emph{inconsistency measure}~\cite{DBLP:journals/ndjfl/Grant78} assesses the severity of inconsistency, it gives the amount of inconsistency in a knowledge base, possibly with respect to the size of the KB. Usually, measures that fall in the first case are called \emph{absolute}, differently from the latter, called \emph{relative}, that consider, e.g., the number of assertions in the ABox, or the number of axioms in the whole KB. These measures can help in handling inconsistency. However,  justifications are needed to debug the KB.
		In the last decades, many different measures have been proposed \cite{DBLP:conf/kr/HunterK08,DBLP:conf/ijcai/GrantH11,DBLP:conf/ecai/XiaoM12,DBLP:conf/ecsqaru/GrantH13,DBLP:conf/ijcai/KoniecznyLM03}.
		Others consider the KB as a set of consistent logic formulae (inconsistent formulae can be split into consistent sub-formulae). Each consistent formula has at least one model, which is a world, which can be represented by a point in a Euclidean space. Using these points, it is possible to define measures based on the distance between the points in the Euclidean space~\cite{DBLP:conf/ecsqaru/GrantH13}.
		
		In a broad sense, computing the probability of the inconsistency of a KB is equivalent to computing an inconsistency measure that depends on the degree of belief in the axioms of the KB. From this point of view, the resulting measure may be considered as something in between of an absolute and a relative measure, because it does not consider the size of the KB, but the probabilities specified in it. De Bona et al.~\cite{DBLP:journals/jair/BonaGHK19} provide a classification of inconsistency using a bipartite graph relating logical formulae (built on axioms) and minimal inconsistent subsets of the KB. This allows one to compute different measures based on the count of formulae or subsets of the axioms of the KB from the bipartite graph. However, a direct comparison among all these measures is not possible in the majority of cases, because each  has its
		rationale and grows differently as the KB increases.

	\section{Experiments}
	\label{sec:exp}
	To test the feasibility of our approach, we extended our reasoners TRILL~\cite{DBLP:journals/tplp/ZeseCLBR19,ZesBelRig16-AMAI-IJ,DBLP:journals/ws/ZeseC21} and BUNDLE~\cite{RigBelLamZese13-RR13b-IC,CotRigZesBelLam-IC-2018} as described in the previous sections. TRILL is written in Prolog and can handle \shiq KBs, while BUNDLE is written in Java and uses an underlying non-probabilistic OWL reasoner to collect the set of justifications w.r.t. OWL 2 (essentially \sroiqd) KBs. In particular, BUNDLE embeds Pellet~\cite{DBLP:journals/ws/SirinPGKK07}, Hermit~\cite{shearer2008hermit},  FaCT++~\cite{tsarkov2006fact++} and JFaCT\footnote{\url{http://jfact.sourceforge.net/}} as OWL reasoners, and three justification generators, namely GlassBox (only for Pellet), BlackBox and OWL Explanation. Moreover, BUNDLE also encapsulates TRILL.
	
	In this paper we refer to the extension of the reasoners TRILL and BUNDLE as \textit{TRILLInc} and \textit{BUNDLEInc} respectively. In their original version, both reasoners, in case of an inconsistent KB, can be used only to collect the justifications for the inconsistency of the KB, while their extension apply the notions explained in this paper.
	
	As regards TRILL and \textit{TRILLInc}, to the almost 8000 lines of code of TRILL, we needed to add only 144 lines of code and to modify another hundred or so lines of code to implement the computation of $P_C(Q)$. To implement the computation of the repair semantics we needed to add approximatively 170 lines of code. \textit{TRILLInc} implements the extension of the tracing function.
	The code of \textit{TRILLInc}, together with the KBs used in this section, is available on GitHub\footnote{\url{https://github.com/rzese/trill\_inc}}.

	As regards BUNDLE and \textit{BUNDLEInc}, we adapted the code so that all the internal reasoners can be used to solve queries w.r.t. possibly inconsistent KBs. The code of \textit{BUNDLEInc} implements the addition of the negation of the query in the KB by means of the fresh concept $C_{Q_P}$. \textit{BUNDLEInc} can use all the reasoners used by BUNDLE (default is Hermit) with the ``OWLExplanation'' justification generator. \textit{BUNDLEInc} can only use this justification generator because other generators, such as the ``GlassBox'' generator developed by the authors of Pellet, may present some problems when the KB is inconsistent. These issues are present in all those generators that exploit the fact that the KB is consistent to avoid searching the entire KB for justifications. \textit{BUNDLEInc} is available from BitBucket\footnote{\url{https://bitbucket.org/machinelearningunife/bundle/src/bundle_inc/}}.

	To test the approach we presented in this paper, we pose ourselves three questions:
	\begin{description}
		\item[Q1] How much do the extensions affect the running time?
		\item[Q2] How does our approach compare with CQApri~\cite{DBLP:journals/jair/BienvenuBG19}, a well-known reasoner for the repair semantics?
		\item[Q3] How does our approach scale when considering complex KBs?
	\end{description}
	
	To answer these questions, we conducted different tests. Firstly, we ran our systems in their two versions, the classical one and the ``\textit{Inc}'' one (the version implementing the approach proposed in this paper). Then, we concentrate on one of our systems, \textit{BUNDLEInc}, and compare it with CQApri\footnote{Available at \url{https://lahdak.lri.fr/CQAPri}} on the benchmark its authors used to test the system. This choice depends on two facts: first, since CQApri is implemented in Java, considering \textit{BUNDLEInc} removes as much as possible the dependence from the test platform; second, \textit{TRILLInc} needed more than 12 hours for the KBs considered in this test, as better discussed in section \ref{subsec:q2}.
	
	All the tests have been performed on a Linux machine with equipped with Intel\textsuperscript{\textcopyright} Core\textsuperscript{TM} i7-8565U CPU @ 1.80GHz, with 16 GiB of RAM.
	
	In this section, we call ``DISPONTE time'' the time for building the BDD and computing the probability from it. In particular, the time for the computation of the probability is negligible with respect the time for building the BDD. We indicate with ``Repair time'' the whole time to find out under which repair semantics the query is true: we check whether the query is true under the Brave semantics, if the test returns true, we check the IAR semantics and, in case the query is not entailed under this semantics, we check the AR semantics. Therefore, if the reasoner returns false for a query, it checks only the Brave semantics, if it returns true under IAR for the query, the reasoner checks the Brave and IAR semantics, while if the reasoner returns that the query is true under AR or Brave, it checks the Brave, IAR and AR semantics. It is not possible to clearly separate the time to check each semantics, since the BDDs created during the computation of the Brave semantics are used also for the IAR and AR semantics, while the operations done during the computation of the IAR semantics are also used for the AR semantics. However, the time for checking the IAR semantics is on average similar or smaller than that for checking the Brave semantics, while the time for the AR semantics, is on average 95\% of the whole computation time needed for the repair semantics.

	\subsection{Q1: How much do the extensions affect the running time?}\label{subsec:q1}
	For the first question, 
	we considered KBs similar of those presented in Test~3 of~\cite{DBLP:journals/tplp/ZeseCLBR19}. We built 9 KBs of increasing size containing the following axioms, for $i$ varying from 1 to $n$, with $n\in\{2,3,4,5,6,7,8,9,10\}$:
	$$\begin{array}{cccc}
		B_{i-1}\sqsubseteq P_i\sqcap Q_i&\>\>\>\>\>
		P_i\sqsubseteq B_i\>\>\>\>\>&
		Q_i\sqsubseteq B_i\>\>\>\>\>&
		x:B_0
	\end{array}$$
	where $B_i$, $Q_i$, $P_i$ are simple concepts. However, in principle, they can be concept expressions of any expressivity handled by the reasoner.
	These KBs have a number of justifications for the query $Q=x:B_n$ that grows exponentially with $n$. The choice of this KB is to force the creation of a number of justifications that grows exponentially, preferring larger number of justifications instead of bigger KBs. This choice has been made to stress the two most expensive operations in our reasoners: on the one hand, the application of tableau expansion rules, since it needs the management of choice points, backtracking and, possibly, creating new tableaux depending on the implementation of the reasoner; on the other hand, the management of the BDDs. Moreover, it is easy to decide where to put inconsistency and decide the percentage of probabilistic axioms.
	
	For each KB, we created a second version in which we added a disjoint-classes axiom asserting that classes $B_j$ and $B_k$ are disjoint, with $j,k$ set as explained below. This, combined with the class assertion axiom, makes the KBs inconsistent.
	We built a KB for each value of $n$ in order to see how the running time changes as $n$ increases.
	We run the query $Q=x:B_n$ 10 times w.r.t. each KB in order to compute the average running time for answering the query. We do not expect to have large variability in the running times among the 10 runs, but we decided to execute the query more times in order to make sure that the reported results are not affected by the load of the machine used during the tests. We compared the running time of the original version of the reasoners to solve $Q$ w.r.t. the KBs without the disjoint-classes axiom  causing the inconsistency, with the running time taken by the ``\textit{Inc}'' version of the reasoners w.r.t. four settings:
	\begin{description}
		\item[(S1)] KBs without the disjoint-classes axiom, thus consistent, to see how much overhead we add to the whole process and so, how much the introduced extension affects the reasoning in case of consistent KBs;
		\item[(S2)] KBs with the disjoint-classes axiom considering the classes $B_0$ and $B_1$ ($j=0,k=1$), thus inconsistent, where there are only two justifications for the inconsistency, consisting both of three axioms, to see how the running time changes in the best case, i.e., when collecting justifications for the inconsistency is trivial;
		\item[(S3)] KBs with the disjoint-classes axiom considering the classes $B_{n}$ and $B_{n-1}$ ($j=n,k=n-1$), thus inconsistent, whose justifications  are the same of those of the query, to see how the running time changes in the case with the higher number of justification possible for the KB.  In the worst case, i.e., $n=10$, the KB contains 30 axioms but there are $2^n+2^n=2048$ justifications, 1024 for $Q$ and 1024 for the inconsistency, a situation difficult to achieve even with large KBs.
		\item[(S4)] KBs with the disjoint-classes axiom considering the classes $B_{n}$ and $B_{n-1}$ ($j=n,k=n-1$) (as in setting \textbf{(3)}) with the addition of four new axioms:
		$$\begin{array}{llll}
			C_0\sqsubseteq C_{0,1}\sqcap C_{0,2}&\>\>\>\>\>
			C_{0,1}\sqsubseteq C_1\>\>\>\>\>&
			C_{0,2}\sqsubseteq C_1\>\>\>\>\>&
			x:C_0
		\end{array}$$
		In this case the query is $Q=x:C_1$,in order to test the reasoners when the number of justifications for the query is fixed while the number of justifications for the inconsistency grows.
	\end{description}

	\noindent
	Considering these four settings, we tested them in three different scenarios:
	\begin{description}
		\item[(SC1.1)] all the settings considering KBs where only the assertional axioms are probabilistic;
		\item[(SC1.2)] all the settings considering KBs where all the axioms are probabilistic;
		\item[(SC1.3)] only setting \textbf{(S3)} with KBs where we increased the number of probabilistic axioms.
	\end{description}

	\paragraph{Scenario (SC1.1).} Table~\ref{tab:test-ext-res-t-sc1} shows, for each KB and for each setting, the average running time, the minimum and maximum running time, the time for computing the BDD and the probability of the query, and the time for computing the repair semantics, all in seconds, for TRILL and \textit{TRILLInc}. The table also shows the ratio between the running time of TRILL/\textit{TRILLInc}. Table~\ref{tab:test-ext-res-b-sc1} shows the same information for BUNDLE/\textit{BUNDLEInc}. The average ratios between the average running time of the plain and \textit{Inc} versions of the reasoners are also shown, computed by considering all the KBs and the KBs of sizes from 5 to 10, i.e., the KBs with a significant number of justifications and so KBs where the extension could affect the running time more. For setting \textbf{(S4)} the running time ratio is computed against the running time of TRILL or BUNDLE on the KB of setting \textbf{(S4)} without inconsistency for query $Q=x:C_1$.
	
	As expected, the relative standard deviation\footnote{Relative standard deviation is the measure of the ratio of the standard deviation to the mean in percentage. So, given the mean value $\mu$ and the standard deviation $\sigma$, the relative standard deviation is $\frac{\sigma}{\mu}\cdot100$.} of the whole running time across the 10 repetitions is low for all running times collected: it is below 1\% for TRILL, while it is close to 4\% for BUNDLE, so we do not report them in the tables for simplicity.  These values mean that, given a query and a KB, the reasoning time does not depend on the choices done by the reasoner during the exploration of the search space.

	\begin{table}
		\centering
		\caption{Average running time in seconds computed on 10 executions of the query with the original version of the reasoner TRILL and the extended version of the reasoner \textit{TRILLInc} in settings \textbf{(S1)}, \textbf{(S2)}, \textbf{(S3)}, and \textbf{(S4)} for scenario \textbf{(SC1.1)}, together with the minimum and maximum running time, and ratio of the two average running time measurements, the times for computing the DISPONTE probability, and for the computation of the repair semantics in seconds with the percentage of the total running time. The average of the ratios considering all and only sizes greater than 4 are also shown.\label{tab:test-ext-res-t-sc1}}	
		\tiny
		\begin{tabular}{c|c|c|c|c|c|c|c|c}
			& \textbf{Time} &                & \textbf{Min} & \textbf{Max} & \multicolumn{2}{c|}{\textbf{DISPONTE Time}} & \multicolumn{2}{c}{\textbf{Repair Time}} \\
			\textbf{i} & \textbf{(s)}  & \textbf{Ratio} & \textbf{(s)} & \textbf{(s)} & \textbf{\%} &         \textbf{(s)}          & \textbf{\%} &        \textbf{(s)}        \\ \hline\hline
			&                                                          \multicolumn{8}{c}{\textbf{TRILL}}                                                           \\ \hline
			2      &     0.003     &                &    0.003     &    0.003     &    5.643    &            0.00019            &                    \multicolumn{2}{r}{ } \\
			3      &     0.004     &                &    0.004     &    0.004     &   10.743    &            0.00045            &                    \multicolumn{2}{r}{ } \\
			4      &     0.007     &                &    0.007     &    0.007     &   13.839    &            0.00094            &                    \multicolumn{2}{r}{ } \\
			5      &     0.020     &                &    0.019     &    0.020     &   10.760    &            0.00210            &                    \multicolumn{2}{r}{ } \\
			6      &     0.093     &                &    0.092     &    0.093     &    5.071    &            0.00471            &                    \multicolumn{2}{r}{ } \\
			7      &     0.593     &                &    0.588     &    0.601     &    1.521    &            0.00902            &                    \multicolumn{2}{r}{ } \\
			8      &     4.268     &                &    4.247     &    4.300     &    0.993    &            0.04237            &                    \multicolumn{2}{r}{ } \\
			9      &    32.893     &                &    32.807    &    33.039    &    0.585    &            0.19247            &                    \multicolumn{2}{r}{ } \\
			10     &    264.699    &                &   263.975    &   265.661    &    0.250    &            0.66154            &                    \multicolumn{2}{r}{ } \\ \hline\hline
			&                                              \multicolumn{8}{c}{\textbf{\textit{TRILLInc} setting (S1)}}                                              \\ \hline
			2      &     0.004     &      1.06      &    0.004     &    0.004     &    5.548    &            0.00020            &    0.753    &          0.00003           \\
			3      &     0.005     &      1.11      &    0.005     &    0.006     &    9.703    &            0.00045            &    1.152    &          0.00005           \\
			4      &     0.008     &      1.18      &    0.008     &    0.009     &   11.882    &            0.00096            &    1.473    &          0.00012           \\
			5      &     0.024     &      1.21      &    0.023     &    0.029     &    9.910    &            0.00234            &    1.090    &          0.00026           \\
			6      &     0.109     &      1.17      &    0.108     &    0.110     &    4.586    &            0.00499            &    0.526    &          0.00057           \\
			7      &     0.689     &      1.16      &    0.686     &    0.692     &    1.651    &            0.01137            &    0.532    &          0.00366           \\
			8      &     4.955     &      1.16      &    4.917     &    5.002     &    0.893    &            0.04424            &    0.058    &          0.00286           \\
			9      &    38.291     &      1.16      &    38.039    &    38.687    &    0.523    &            0.20040            &    0.016    &          0.00625           \\
			10     &    306.773    &      1.16      &   305.592    &   308.832    &    0.225    &            0.68957            &    0.097    &          0.29667           \\ \hline
			\multicolumn{2}{r|}{ } &      1.15      & \multicolumn{6}{l}{\textbf{Avg on results of size 2 to 10}}                                                          \\
			\multicolumn{2}{r|}{ } &      1.17      & \multicolumn{6}{l}{\textbf{Avg on results of size 5 to 10}}                                                          \\ \hline\hline
			&                                              \multicolumn{8}{c}{\textbf{\textit{TRILLInc} setting (S2)}}                                              \\ \hline
			2      &     0.004     &      1.09      &    0.004     &    0.004     &    6.975    &            0.00026            &    0.936    &          0.00003           \\
			3      &     0.005     &      1.11      &    0.005     &    0.005     &   10.177    &            0.00048            &    1.208    &          0.00006           \\
			4      &     0.008     &      1.16      &    0.008     &    0.008     &   12.754    &            0.00100            &    1.472    &          0.00012           \\
			5      &     0.023     &      1.17      &    0.023     &    0.023     &    9.748    &            0.00223            &    1.100    &          0.00025           \\
			6      &     0.109     &      1.18      &    0.108     &    0.112     &    4.631    &            0.00506            &    0.530    &          0.00058           \\
			7      &     0.691     &      1.17      &    0.688     &    0.694     &    1.650    &            0.01140            &    0.529    &          0.00366           \\
			8      &     4.980     &      1.17      &    4.954     &    5.006     &    0.891    &            0.04439            &    0.426    &          0.02121           \\
			9      &    38.362     &      1.17      &    38.116    &    39.243    &    0.523    &            0.20049            &    0.016    &          0.06259           \\
			10     &    306.770    &      1.16      &   305.555    &   308.353    &    0.225    &            0.69054            &    0.097    &          0.29624           \\ \hline
			\multicolumn{2}{r|}{ } &      1.15      & \multicolumn{6}{l}{\textbf{Avg on results of size 2 to 10}}                                                          \\
			\multicolumn{2}{r|}{ } &      1.17      & \multicolumn{6}{l}{\textbf{Avg on results of size 5 to 10}}                                                          \\ \hline\hline
			&                                              \multicolumn{8}{c}{\textbf{\textit{TRILLInc} setting (S3)}}                                              \\ \hline
			2      &     0.004     &      1.15      &    0.004     &    0.004     &    9.152    &            0.00036            &    1.096    &          0.00004           \\
			3      &     0.006     &      1.32      &    0.006     &    0.006     &   14.561    &            0.00081            &    1.698    &          0.00009           \\
			4      &     0.012     &      1.83      &    0.012     &    0.013     &   15.473    &            0.00192            &    1.758    &          0.00022           \\
			5      &     0.045     &      2.33      &    0.045     &    0.046     &    9.841    &            0.00448            &    1.100    &          0.00050           \\
			6      &     0.248     &      2.67      &    0.247     &    0.250     &    4.182    &            0.01038            &    0.268    &          0.00064           \\
			7      &     1.597     &      2.69      &    1.586     &    1.630     &    2.786    &            0.04451            &    0.166    &          0.00265           \\
			8      &    11.298     &      2.65      &    11.240    &    11.385    &    0.784    &            0.08853            &    0.052    &          0.00592           \\
			9      &    86.743     &      2.64      &    86.477    &    87.436    &    0.297    &            0.25736            &    0.180    &          0.15574           \\
			10     &    684.984    &      2.59      &   682.504    &   687.596    &    0.122    &            0.83374            &    0.086    &          0.58919           \\ \hline
			\multicolumn{2}{r|}{ } &      2.21      & \multicolumn{6}{l}{\textbf{Avg on results of size 2 to 10}}                                                          \\
			\multicolumn{2}{r|}{ } &      2.59      & \multicolumn{6}{l}{\textbf{Avg on results of size 5 to 10}}                                                          \\ \hline\hline
			&                                              \multicolumn{8}{c}{\textbf{\textit{TRILLInc} setting (S4)}}                                              \\ \hline
			2      &     0.004     &      1.23      &    0.004     &    0.004     &    6.893    &            0.00029            &    1.016    &          0.00004           \\
			3      &     0.006     &      1.34      &    0.006     &    0.006     &    9.578    &            0.00054            &    1.312    &          0.00007           \\
			4      &     0.011     &      1.62      &    0.011     &    0.011     &   10.132    &            0.00112            &    1.342    &          0.00015           \\
			5      &     0.040     &      2.06      &    0.040     &    0.042     &    6.094    &            0.00245            &    0.784    &          0.00031           \\
			6      &     0.220     &      2.35      &    0.218     &    0.224     &    2.486    &            0.00546            &    0.315    &          0.00069           \\
			7      &     1.499     &      2.53      &    1.494     &    1.508     &    0.818    &            0.01226            &    1.501    &          0.02249           \\
			8      &    10.894     &      2.55      &    10.838    &    10.952    &    0.250    &            0.02728            &    0.362    &          0.03946           \\
			9      &    84.846     &      2.58      &    84.432    &    87.071    &    0.238    &            0.20210            &    0.088    &          0.07424           \\
			10     &    676.727    &      2.55      &   673.832    &   679.774    &    0.019    &            0.13118            &    0.088    &          0.59637           \\ \hline
			\multicolumn{2}{r|}{ } &      2.09      & \multicolumn{6}{l}{\textbf{Avg on results of size 2 to 10}}                                                          \\
			\multicolumn{2}{r|}{ } &      2.44      & \multicolumn{6}{l}{\textbf{Avg on results of size 5 to 10}}                                                          \\ \hline
		\end{tabular}
	\end{table}

	\begin{table}
		\centering
		\caption{Average running time in seconds computed on 10 executions of the query with the original version of the reasoner BUNDLE and the extended version of the reasoner \textit{BUNDLEInc} in settings \textbf{(S1)}, \textbf{(S2)}, \textbf{(S3)}, and \textbf{(S4)} for scenario \textbf{(SC1.1)}, together with the minimum and maximum running time, and ratio of the two average running time measurements, the times for computing the DISPONTE probability, and for repair semantics computation in seconds with the percentage of the total running time. The average of the ratios considering all and only sizes greater than 4 are also shown.\label{tab:test-ext-res-b-sc1}}	
		\tiny
		\begin{tabular}{c|c|c|c|c|c|c|c|c}
			& \textbf{Time} &                & \textbf{Min} & \textbf{Max} & \multicolumn{2}{c|}{\textbf{DISPONTE Time}} & \multicolumn{2}{c}{\textbf{Repair Time}} \\
			\textbf{i} & \textbf{(s)}  & \textbf{Ratio} & \textbf{(s)} & \textbf{(s)} & \textbf{\%} &         \textbf{(s)}          & \textbf{\%} &        \textbf{(s)}        \\ \hline\hline
			&                                                          \multicolumn{8}{c}{\textbf{BUNDLE}}                                                          \\ \hline
			2      &     0.854     &                &    0.810     &    0.890     &    5.200    &             0.044             &                    \multicolumn{2}{r}{ } \\
			3      &     1.306     &                &    1.227     &    1.371     &    3.692    &             0.048             &                    \multicolumn{2}{r}{ } \\
			4      &     2.089     &                &    1.884     &    2.213     &    2.566    &             0.054             &                    \multicolumn{2}{r}{ } \\
			5      &     3.314     &                &    2.819     &    3.628     &    1.473    &             0.049             &                    \multicolumn{2}{r}{ } \\
			6      &     5.555     &                &    4.795     &    5.995     &    0.884    &             0.049             &                    \multicolumn{2}{r}{ } \\
			7      &     8.819     &                &    7.645     &    9.727     &    0.535    &             0.047             &                    \multicolumn{2}{r}{ } \\
			8      &    15.798     &                &    14.853    &    17.070    &    0.339    &             0.053             &                    \multicolumn{2}{r}{ } \\
			9      &    26.573     &                &    25.890    &    27.566    &    0.282    &             0.075             &                    \multicolumn{2}{r}{ } \\
			10     &    48.043     &                &    47.167    &    48.886    &    0.198    &             0.095             &                    \multicolumn{2}{r}{ } \\ \hline\hline
			&                                             \multicolumn{8}{c}{\textbf{\textit{BUNDLEInc} setting (S1)}}                                              \\ \hline
			2      &     1.523     &      1.78      &    1.332     &    1.663     &    2.982    &             0.045             &    0.085    &           0.0013           \\
			3      &     2.725     &      2.09      &    2.464     &    3.141     &    1.538    &             0.042             &    0.037    &           0.0010           \\
			4      &     5.237     &      2.51      &    4.537     &    6.284     &    0.787    &             0.041             &    0.025    &           0.0013           \\
			5      &     9.980     &      3.01      &    9.507     &    10.667    &    0.434    &             0.043             &    0.018    &           0.0018           \\
			6      &    17.893     &      3.22      &    17.172    &    18.538    &    0.254    &             0.046             &    0.017    &           0.0030           \\
			7      &    34.799     &      3.95      &    33.731    &    35.383    &    0.155    &             0.054             &    0.016    &           0.0054           \\
			8      &    71.227     &      4.51      &    70.117    &    73.587    &    0.081    &             0.058             &    0.014    &           0.0100           \\
			9      &    147.463    &      5.55      &   144.970    &   150.335    &    0.045    &             0.067             &    0.018    &           0.0260           \\
			10     &    310.723    &      6.47      &   306.245    &   318.138    &    0.035    &             0.109             &    0.018    &           0.0545           \\ \hline
			\multicolumn{2}{r|}{ } &      3.68      & \multicolumn{6}{l}{\textbf{Avg on results of size 2 to 10}}                                                          \\
			\multicolumn{2}{r|}{ } &      4.45      & \multicolumn{6}{l}{\textbf{Avg on results of size 5 to 10}}                                                          \\ \hline\hline
			&                                             \multicolumn{8}{c}{\textbf{\textit{BUNDLEInc} setting (S2)}}                                              \\ \hline
			2      &     1.950     &      2.28      &    1.720     &    2.103     &    2.257    &             0.044             &    0.051    &           0.0010           \\
			3      &     3.041     &      2.33      &    2.718     &    3.535     &    1.555    &             0.047             &    0.036    &           0.0011           \\
			4      &     5.622     &      2.69      &    4.906     &    6.200     &    0.822    &             0.046             &    0.027    &           0.0015           \\
			5      &     9.736     &      2.94      &    8.772     &    10.562    &    0.486    &             0.047             &    0.020    &           0.0019           \\
			6      &    18.208     &      3.28      &    17.006    &    19.542    &    0.286    &             0.052             &    0.017    &           0.0031           \\
			7      &    35.324     &      4.01      &    34.515    &    36.351    &    0.150    &             0.053             &    0.016    &           0.0058           \\
			8      &    71.343     &      4.52      &    70.034    &    74.598    &    0.085    &             0.061             &    0.014    &           0.0103           \\
			9      &    147.523    &      5.55      &   144.802    &   150.047    &    0.048    &             0.070             &    0.017    &           0.0255           \\
			10     &    312.981    &      6.51      &   307.939    &   316.076    &    0.032    &             0.100             &    0.018    &           0.0579           \\ \hline
			\multicolumn{2}{r|}{ } &      3.79      & \multicolumn{6}{l}{\textbf{Avg on results of size 2 to 10}}                                                          \\
			\multicolumn{2}{r|}{ } &      4.47      & \multicolumn{6}{l}{\textbf{Avg on results of size 5 to 10}}                                                          \\ \hline\hline
			&                                             \multicolumn{8}{c}{\textbf{\textit{BUNDLEInc} setting (S3)}}                                              \\ \hline
			2      &     2.268     &      2.66      &    2.019     &    2.506     &    1.979    &            0.0449             &    0.048    &           0.0011           \\
			3      &     4.547     &      3.48      &    3.924     &    5.047     &    0.974    &            0.0443             &    0.026    &           0.0012           \\
			4      &     8.858     &      4.24      &    8.012     &    9.735     &    0.542    &             0.048             &    0.017    &           0.0015           \\
			5      &    16.568     &      5.00      &    15.723    &    17.711    &    0.300    &             0.050             &    0.017    &           0.0028           \\
			6      &    31.267     &      5.63      &    30.524    &    32.246    &    0.176    &             0.055             &    0.016    &           0.0051           \\
			7      &    64.048     &      7.26      &    62.711    &    65.814    &    0.084    &             0.054             &    0.013    &           0.0082           \\
			8      &    133.616    &      8.46      &   130.435    &   135.210    &    0.053    &             0.070             &    0.015    &           0.0202           \\
			9      &    288.895    &     10.87      &   283.933    &   293.431    &    0.036    &             0.105             &    0.019    &           0.0536           \\
			10     &    600.368    &     12.50      &   600.329    &   600.447    &    0.025    &             0.148             &    0.014    &           0.0830           \\ \hline
			\multicolumn{2}{r|}{ } &      6.68      & \multicolumn{6}{l}{\textbf{Avg on results of size 2 to 10}}                                                          \\
			\multicolumn{2}{r|}{ } &      8.29      & \multicolumn{6}{l}{\textbf{Avg on results of size 5 to 10}}                                                          \\ \hline\hline
			&                                             \multicolumn{8}{c}{\textbf{\textit{BUNDLEInc} setting (S4)}}                                              \\ \hline
			2      &     2.563     &      2.95      &    2.207     &    2.837     &    1.697    &             0.044             &    0.066    &           0.0017           \\
			3      &     3.897     &      3.07      &    3.386     &    4.410     &    1.144    &             0.045             &    0.049    &           0.0019           \\
			4      &     6.440     &      3.08      &    5.776     &    7.263     &    0.700    &             0.045             &    0.033    &           0.0021           \\
			5      &    12.394     &      3.78      &    10.645    &    13.527    &    0.362    &             0.045             &    0.023    &           0.0028           \\
			6      &    21.586     &      3.92      &    20.285    &    23.420    &    0.223    &             0.048             &    0.018    &           0.0039           \\
			7      &    41.858     &      4.66      &    41.079    &    44.417    &    0.132    &             0.055             &    0.015    &           0.0061           \\
			8      &    82.718     &      5.17      &    81.881    &    84.250    &    0.072    &             0.059             &    0.015    &           0.0123           \\
			9      &    168.093    &      6.27      &   165.568    &   170.733    &    0.044    &             0.074             &    0.017    &           0.0286           \\
			10     &    360.209    &      7.54      &   357.361    &   368.554    &    0.031    &             0.110             &    0.017    &           0.0615           \\ \hline
			\multicolumn{2}{r|}{ } &      4.49      & \multicolumn{6}{l}{\textbf{Avg on results of size 2 to 10}}                                                          \\
			\multicolumn{2}{r|}{ } &      5.22      & \multicolumn{6}{l}{\textbf{Avg on results of size 5 to 10}}                                                          \\ \hline
		\end{tabular}
	\end{table}

	As one can see, the proposed extension adds an overhead which lies between 6\% (Ratio 1.06, $i=2$, setting \textbf{(S1)}) and 169\% (Ratio 2.69, $i=7$, setting \textbf{(S3)}) for TRILL. For BUNDLE the overhead is larger, lying between 78\% (Ratio 1.78, $i=2$, setting \textbf{(S1)}) and 1150\% (Ratio 12.50, $i=10$, setting \textbf{(S3)}).  Given the time needed for DISPONTE and for the repair semantics check, this ratio only depends on the way the reasoner explores the search space of the justifications, as their time is negligible (less than 1\%) when the running time is longer than a second. This exploration method differs between \textit{TRILLInc} and \textit{BUNDLEInc}. However, DISPONTE time does not increase as fast as the running time, dropping under 1\% when the running time becomes larger than several seconds for both TRILL and BUNDLE. Similar results are obtained for the computation of the repair semantics, with repair time is always below 2\% of the whole running time for TRILL and below 1\% for BUNDLE.  When the number of justifications is small, the overhead of the computation of the repair and the DISPONTE probability on the reasoning time is mitigated by an initialization phase of the reasoner, which affects all the executions  and does not depend on the query or on the number of justifications for the query, but only on the operations needed by the reasoner to start the search for justifications (e.g., KB loading, initialization of the internal reasoner for \textit{BUNDLEInc}). When the number of justifications increases, this initialization phase becomes more and more negligible, while the time to compute the query answer using the justifications increases with their number. This is shown in columns ``Time DISPONTE \%'' and ``Time Repair \%'' proving that the effects of the computation of repairs and probability becomes smaller. 
	
	Overall, the overhead is dependent on the time for finding the justifications. In fact, when the search for justification is fast, the general ratio (column ``Ratio'') is high, for increasing values of $i$ ($i>5$) the ratio starts to slowly decrease for \textit{TRILLInc}, where the search for justifications is not much affected by the extension. For \textit{BUNDLEInc}, the search for justifications is more affected by the changes to the tracing function, also due to how the ``OWLExplanation'' generator performs the search to generate all the justifications. For small values of $i$, the overhead introduced by the computation of the DISPONTE probability and repair semantics is more significant than that of the search for justifications.  From settings \textbf{(S1)} and \textbf{(S2)} we can see that, in the case of consistent KBs or when the number of justifications for the inconsistency is small, the general overhead is similar, and the number of justifications for the inconsistency is the aspect that most affect the running time, as it can be seen from settings \textbf{(S3)} and \textbf{(S4)} which show similar results.  So, the extension does not significantly affect the running time on consistent or inconsistent KBs with few, small justifications for the inconsistency.

\paragraph{Scenario (SC1.2).} Table~\ref{tab:test-ext-res-t-sc2} shows, for each KB and for each setting, the average running time, the minimum and maximum running time, the DISPONTE time, and the repair time, all in seconds, for TRILL and \textit{TRILLInc}. The table also shows the ratio between the running time of TRILL and\textit{TRILLInc}. Average ratios between the average running time of the two versions of the reasoner are also shown, computed by considering all the KBs and the KBs of sizes from 5 to 10. Table~\ref{tab:test-ext-res-b-sc2} shows the same information for BUNDLE/\textit{BUNDLEInc}. Again, the relative standard deviation of the whole time across all 10 repetitions is below 1\% except for setting \textbf{(S3)}, where it increases to 1.4\% for $i=5$ and $i=6$. These values confirm that, once the KB and query are fixed, the running time should not significantly vary. If this happens, the motivation should be sought by considering the CPU load of the machine where the reasoner is running.

In this setting, for $i>6$, we stopped the execution of TRILL after 12 hours while BUNDLE could not manage such values of $i$ due to out-of-memory errors caused by the library for managing BDDs when computing the repair semantics. As in scenario \textbf{(SC1.1)}, for setting \textbf{(S4)} the running time ratio is computed against the running time of TRILL or BUNDLE w.r.t. the KB of setting \textbf{(S4)} without inconsistency for query $Q=x:C_1$.

\begin{table}
	\centering
	\caption{Average running time in seconds computed on 10 executions of the query with the original version of the reasoner TRILL and the extended version of the reasoner \textit{TRILLInc} in settings \textbf{(S1)}, \textbf{(S2)}, \textbf{(S3)}, and \textbf{(S4)} for scenario \textbf{(SC1.2)}, together with the minimum and maximum running time, and ratio of the two average running time measurements, the times for computing the DISPONTE probability, and for repair semantics computation in seconds with the percentage of the total running time. The average of the ratios considering all and only sizes greater than 4 are also shown. The values ``--'' means that the execution has been stopped due to timeout (set to 12 hours).\label{tab:test-ext-res-t-sc2}}	
	\tiny
	\begin{tabular}{c|c|c|c|c|c|c|c|c}
		& \textbf{Time} &                & \textbf{Min} & \textbf{Max} & \multicolumn{2}{c|}{\textbf{DISPONTE Time}} & \multicolumn{2}{c}{\textbf{Repair Time}} \\
		\textbf{i} & \textbf{(s)}  & \textbf{Ratio} & \textbf{(s)} & \textbf{(s)} & \textbf{\%} &         \textbf{(s)}          & \textbf{\%} &        \textbf{(s)}        \\ \hline\hline
		&                                                          \multicolumn{8}{c}{\textbf{TRILL}}                                                           \\ \hline
		2      &     0.004     &                &    0.003     &    0.004     &    9.117    &            0.0003             &          \multicolumn{2}{c}{ }           \\
		3      &     0.004     &                &    0.004     &    0.005     &   17.796    &            0.0008             &          \multicolumn{2}{c}{ }           \\
		4      &     0.007     &                &    0.007     &    0.008     &   25.756    &            0.0019             &          \multicolumn{2}{c}{ }           \\
		5      &     0.021     &                &    0.021     &    0.021     &   21.552    &            0.0045             &          \multicolumn{2}{c}{ }           \\
		6      &     0.096     &                &    0.095     &    0.096     &    9.909    &            0.0095             &          \multicolumn{2}{c}{ }           \\
		7      &     0.598     &                &    0.595     &    0.602     &    3.476    &            0.0208             &          \multicolumn{2}{c}{ }           \\
		8      &     4.323     &                &    4.301     &    4.390     &    2.126    &            0.0919             &          \multicolumn{2}{c}{ }           \\
		9      &    33.103     &                &    32.920    &    33.462    &    1.000    &            0.3312             &          \multicolumn{2}{c}{ }           \\
		10     &    265.952    &                &   264.770    &   270.531    &    0.364    &            0.9682             &          \multicolumn{2}{c}{ }           \\ \hline\hline
		&                                              \multicolumn{8}{c}{\textbf{\textit{TRILLInc} setting (S1)}}                                              \\ \hline
		2      &     0.004     &      1.07      &    0.004     &    0.004     &    9.069    &            0.0003             &    1.187    &          0.000045          \\
		3      &     0.005     &      1.13      &    0.005     &    0.005     &   16.603    &            0.0008             &    2.057    &          0.000103          \\
		4      &     0.009     &      1.20      &    0.009     &    0.009     &   21.548    &            0.0019             &    4.124    &          0.000370          \\
		5      &     0.026     &      1.22      &    0.025     &    0.027     &   17.811    &            0.0046             &    3.015    &          0.000773          \\
		6      &     0.115     &      1.20      &    0.114     &    0.116     &    8.262    &            0.0095             &    1.860    &          0.002132          \\
		7      &     0.703     &      1.18      &    0.700     &    0.707     &    3.020    &            0.0212             &    0.865    &          0.006085          \\
		8      &     5.050     &      1.17      &    5.033     &    5.075     &    1.948    &            0.0984             &    0.529    &          0.026709          \\
		9      &    38.376     &      1.16      &    38.237    &    38.612    &    0.899    &            0.3451             &    0.050    &          0.019103          \\
		10     &    306.179    &      1.15      &   304.847    &   308.125    &    0.347    &            1.0611             &    0.106    &          0.324672          \\ \hline
		\multicolumn{2}{r|}{ } &      1.16      & \multicolumn{6}{l}{\textbf{Avg on results of size 2 to 10}}                                                          \\
		\multicolumn{2}{r|}{ } &      1.18      & \multicolumn{6}{l}{\textbf{Avg on results of size 5 to 10}}                                                          \\ \hline\hline
		&                                              \multicolumn{8}{c}{\textbf{\textit{TRILLInc} setting (S2)}}                                              \\ \hline
		2      &     0.004     &      1.21      &    0.004     &    0.005     &   10.707    &            0.0005             &    7.683    &          0.00033           \\
		3      &     0.006     &      1.26      &    0.006     &    0.006     &   16.614    &            0.0009             &    9.801    &          0.00055           \\
		4      &     0.010     &      1.34      &    0.010     &    0.010     &   19.688    &            0.0020             &   13.335    &          0.00134           \\
		5      &     0.028     &      1.34      &    0.028     &    0.028     &   16.048    &            0.0045             &   11.050    &          0.00309           \\
		6      &     0.541     &      5.67      &    0.538     &    0.545     &    1.760    &            0.0095             &   79.179    &          0.42867           \\
		7      &     2.376     &      3.97      &    2.371     &    2.383     &    0.901    &            0.0214             &   70.683    &          1.67948           \\
		8      &    25.473     &      5.89      &    25.327    &    25.761    &    0.389    &            0.0991             &   80.171    &          20.42165          \\
		9      &    368.582    &     11.13      &   365.968    &   374.780    &    0.094    &            0.3460             &   89.586    &         330.19911          \\
		10     &    736.830    &      2.77      &   729.724    &   744.526    &    0.144    &            1.0627             &   58.399    &         430.29776          \\ \hline
		\multicolumn{2}{r|}{ } &      3.84      & \multicolumn{6}{l}{\textbf{Avg on results of size 2 to 10}}                                                          \\
		\multicolumn{2}{r|}{ } &      5.13      & \multicolumn{6}{l}{\textbf{Avg on results of size 5 to 10}}                                                          \\ \hline\hline
		&                                              \multicolumn{8}{c}{\textbf{\textit{TRILLInc} setting (S3)}}                                              \\ \hline
		2      &     0.005     &      1.44      &    0.005     &    0.006     &   12.722    &            0.0006             &   16.859    &          0.00085           \\
		3      &     0.048     &     10.67      &    0.047     &    0.048     &    3.429    &            0.0016             &   86.679    &          0.04119           \\
		4      &     5.321     &     711.84     &    5.307     &    5.334     &    0.069    &            0.0037             &   99.737    &          5.30659           \\
		5      &    830.989    &    39655.17    &   812.261    &   850.088    &    0.001    &            0.0084             &   99.994    &         830.93938          \\
		6      &  39,779.910   &   416509.56    &  39133.263   &  40515.894   &    0.000    &            0.0192             &   99.999    &        39779.65049         \\
		7      &      --       &       --       &      --      &      --      &     --      &              --               &     --      &             --             \\ \hline
		\multicolumn{2}{r|}{ } &    91377.73    & \multicolumn{6}{l}{\textbf{Avg on results of size 2 to 6}}                                                           \\
		\multicolumn{2}{r|}{ } &   228082.36    & \multicolumn{6}{l}{\textbf{Avg on results of size 5 to 6}}                                                           \\ \hline\hline
		&                                              \multicolumn{8}{c}{\textbf{\textit{TRILLInc} setting (S4)}}                                              \\ \hline
		2      &     0.004     &      1.25      &    0.004     &    0.004     &   11.543    &            0.0005             &    1.896    &          0.000083          \\
		3      &     0.006     &      1.36      &    0.006     &    0.006     &   16.109    &            0.0010             &    2.885    &          0.000175          \\
		4      &     0.012     &      1.66      &    0.012     &    0.013     &   16.888    &            0.0021             &    4.410    &          0.000549          \\
		5      &     0.043     &      2.04      &    0.042     &    0.045     &   10.408    &            0.0045             &    2.270    &          0.000971          \\
		6      &     0.226     &      2.36      &    0.225     &    0.227     &    4.458    &            0.0101             &    0.980    &          0.002212          \\
		7      &     1.512     &      2.53      &    1.505     &    1.526     &    1.481    &            0.0224             &    1.730    &          0.026159          \\
		8      &    10.933     &      2.53      &    10.891    &    11.077    &    0.770    &            0.0842             &    0.439    &          0.048044          \\
		9      &    84.977     &      2.57      &    84.625    &    85.346    &    0.420    &            0.3573             &    0.032    &          0.026987          \\
		10     &    676.704    &      2.54      &   674.033    &   680.020    &    0.077    &            0.5182             &    0.094    &          0.637594          \\ \hline
		\multicolumn{2}{r|}{ } &      2.09      & \multicolumn{6}{l}{\textbf{Avg on results of size 2 to 10}}                                                          \\
		\multicolumn{2}{r|}{ } &      2.43      & \multicolumn{6}{l}{\textbf{Avg on results of size 5 to 10}}                                                          \\ \hline
	\end{tabular}
\end{table}

\begin{table}
	\centering
	\caption{Average running time in seconds computed on 10 executions of the query with the original version of the reasoner BUNDLE and the extended version of the reasoner \textit{BUNDLEInc} in settings \textbf{(S1)}, \textbf{(S2)}, \textbf{(S3)}, and \textbf{(S4)} for scenario \textbf{(SC1.2)}, together with the minimum and maximum running time, and ratio of the two average running time measurements, the times for computing the DISPONTE probability, and for repair semantics computation in seconds with the percentage of the total running time. The average of the ratios considering all and only sizes greater than 4 are also shown. The value ``--'' means that system crashed due to out of memory error raised by the library for the management of the BDD when computing the repair semantics.\label{tab:test-ext-res-b-sc2}}	
	\tiny
	\begin{tabular}{c|c|c|c|c|c|c|c|c}
		& \textbf{Time} &                & \textbf{Min} & \textbf{Max} & \multicolumn{2}{c|}{\textbf{DISPONTE Time}} & \multicolumn{2}{c}{\textbf{Repair Time}} \\
		\textbf{i} & \textbf{(s)}  & \textbf{Ratio} & \textbf{(s)} & \textbf{(s)} & \textbf{\%} &         \textbf{(s)}          & \textbf{\%} &        \textbf{(s)}        \\ \hline\hline
		&                                                          \multicolumn{8}{c}{\textbf{BUNDLE}}                                                          \\ \hline
		2      &    0.8798     &                &    0.829     &    0.943     &    5.160    &             0.045             &          \multicolumn{2}{c}{ }           \\
		3      &    1.3778     &                &    1.325     &    1.412     &    3.985    &             0.055             &          \multicolumn{2}{c}{ }           \\
		4      &    2.0821     &                &    1.919     &    2.298     &    2.718    &             0.057             &          \multicolumn{2}{c}{ }           \\
		5      &    3.4229     &                &    3.010     &    3.799     &    1.794    &             0.061             &          \multicolumn{2}{c}{ }           \\
		6      &    5.5915     &                &    5.011     &    6.182     &    1.295    &             0.072             &          \multicolumn{2}{c}{ }           \\
		7      &    9.3974     &                &    8.315     &    9.965     &    0.988    &             0.093             &          \multicolumn{2}{c}{ }           \\
		8      &    16.0498    &                &    14.791    &    17.402    &    0.647    &             0.104             &          \multicolumn{2}{c}{ }           \\
		9      &    27.5138    &                &    26.572    &    29.035    &    0.652    &             0.180             &          \multicolumn{2}{c}{ }           \\
		10     &    49.4283    &                &    47.489    &    51.253    &    0.585    &             0.289             &          \multicolumn{2}{c}{ }           \\ \hline\hline
		&                                             \multicolumn{8}{c}{\textbf{\textit{BUNDLEInc} setting (S1)}}                                              \\ \hline
		2      &     1.596     &      1.81      &    1.423     &    1.781     &    2.782    &             0.044             &    0.081    &           0.0013           \\
		3      &    3.0539     &      2.22      &    2.537     &    3.652     &    1.624    &             0.050             &    0.039    &           0.0012           \\
		4      &    5.4023     &      2.60      &    4.687     &    5.822     &    1.050    &             0.057             &    0.030    &           0.0016           \\
		5      &    10.0865    &      2.95      &    9.455     &    10.794    &    0.576    &             0.058             &    0.026    &           0.0026           \\
		6      &    18.4722    &      3.30      &    17.026    &    19.73     &    0.420    &             0.078             &    0.022    &           0.0041           \\
		7      &    35.2566    &      3.75      &    34.262    &    36.386    &    0.260    &             0.092             &    0.020    &           0.0072           \\
		8      &    69.5163    &      4.33      &    68.145    &    70.735    &    0.150    &             0.104             &    0.016    &           0.011            \\
		9      &   147.8663    &      5.37      &   146.227    &    150.82    &    0.131    &             0.194             &    0.014    &           0.0205           \\
		10     &   313.2454    &      6.34      &   305.368    &   319.485    &    0.120    &             0.376             &    0.014    &           0.0447           \\ \hline
		\multicolumn{2}{r|}{ } &      3.63      & \multicolumn{6}{l}{\textbf{Avg on results of size 2 to 10}}                                                          \\
		\multicolumn{2}{r|}{ } &      4.34      & \multicolumn{6}{l}{\textbf{Avg on results of size 5 to 10}}                                                          \\ \hline\hline
		&                                             \multicolumn{8}{c}{\textbf{\textit{BUNDLEInc} setting (S2)}}                                              \\ \hline
		2      &    2.0433     &      2.32      &    1.792     &    2.280     &    2.227    &             0.046             &    0.347    &           0.0071           \\
		3      &    3.1153     &      2.26      &    2.786     &    3.754     &    1.483    &             0.046             &    0.331    &           0.0103           \\
		4      &    5.8579     &      2.81      &    4.938     &    6.326     &    0.925    &             0.054             &    0.210    &           0.0123           \\
		5      &    10.3787    &      3.03      &    9.258     &    11.143    &    0.577    &             0.060             &    0.195    &           0.0202           \\
		6      &    18.5442    &      3.32      &    17.814    &    19.914    &    0.438    &             0.081             &    0.220    &           0.0408           \\
		7      &    35.9427    &      3.82      &    35.071    &    37.207    &    0.260    &             0.093             &    0.215    &           0.0771           \\
		8      &    71.4012    &      4.45      &    70.196    &    72.708    &    0.160    &             0.114             &    0.251    &           0.1792           \\
		9      &   148.5623    &      5.40      &    145.67    &   152.392    &    0.118    &             0.175             &    0.235    &           0.3492           \\
		10     &   314.0794    &      6.35      &   307.931    &   325.788    &    0.104    &             0.328             &    0.274    &           0.8612           \\ \hline
		\multicolumn{2}{r|}{ } &      3.75      & \multicolumn{6}{l}{\textbf{Avg on results of size 2 to 10}}                                                          \\
		\multicolumn{2}{r|}{ } &      4.40      & \multicolumn{6}{l}{\textbf{Avg on results of size 5 to 10}}                                                          \\ \hline\hline
		&                                             \multicolumn{8}{c}{\textbf{\textit{BUNDLEInc} setting (S3)}}                                              \\ \hline
		2      &    2.4621     &      2.80      &     2.15     &     2.73     &    1.974    &             0.049             &    0.585    &           0.0144           \\
		3      &    4.7394     &      3.44      &    4.127     &    5.439     &    1.234    &             0.059             &    1.015    &           0.0481           \\
		4      &    9.4466     &      4.54      &    8.588     &    10.707    &    0.620    &             0.059             &    5.452    &           0.515            \\
		5      &    52.9568    &     15.47      &    50.687    &    55.01     &    0.137    &             0.073             &   69.385    &          36.7439           \\
		6      &      --       &       --       &      --      &      --      &     --      &              --               &     --      &             --             \\ \hline
		\multicolumn{2}{r|}{ } &      6.56      & \multicolumn{6}{l}{\textbf{Avg on results of size 2 to 10}}                                                          \\
		\multicolumn{2}{r|}{ } &     15.47      & \multicolumn{6}{l}{\textbf{Avg on results of size 5 to 10}}                                                          \\ \hline\hline
		&                                             \multicolumn{8}{c}{\textbf{\textit{BUNDLEInc} setting (S4)}}                                              \\ \hline
		2      &    2.7812     &      3.16      &    2.385     &    3.089     &    1.664    &             0.046             &    0.137    &           0.0038           \\
		3      &    4.2761     &      3.10      &    3.565     &    4.744     &    1.202    &             0.051             &    0.119    &           0.0051           \\
		4      &    6.7704     &      3.25      &    5.953     &    7.452     &    0.857    &             0.058             &    0.093    &           0.0063           \\
		5      &    11.996     &      3.50      &    11.552    &    12.595    &    0.561    &             0.067             &    0.082    &           0.0098           \\
		6      &    21.9944    &      3.93      &    21.111    &    22.938    &    0.325    &             0.071             &    0.081    &           0.0179           \\
		7      &    41.0034    &      4.36      &    40.017    &    42.005    &    0.229    &             0.094             &    0.061    &           0.0252           \\
		8      &    81.4007    &      5.07      &    80.235    &    83.102    &    0.156    &             0.127             &    0.063    &           0.0511           \\
		9      &   173.5887    &      6.31      &    169.63    &   182.597    &    0.122    &             0.211             &    0.069    &           0.1206           \\
		10     &    360.365    &      7.29      &   354.149    &   370.689    &    0.091    &             0.326             &    0.059    &           0.2144           \\ \hline
		\multicolumn{2}{r|}{ } &      4.44      & \multicolumn{6}{l}{\textbf{Avg on results of size 2 to 10}}                                                          \\
		\multicolumn{2}{r|}{ } &      5.08      & \multicolumn{6}{l}{\textbf{Avg on results of size 5 to 10}}                                                          \\ \hline
	\end{tabular}
\end{table}

In this scenario, the ratio is larger than in \textbf{(SC1)}, due to the larger number of probabilistic axioms producing larger BDDs for checking the repair semantics. In fact, in this case, the time for computing the probability of the query is negligible and its relative contribution to the running time decreases with the increase of $i$, similarly to scenario \textbf{(SC1.1)}. The DISPONTE time is always below 1 second except for $i=10$ in settings \textbf{(S1)} and \textbf{(S2)}. On the other hand, the repair time in this scenario is a large portion of the whole running time when the number of justifications for the query is high in settings \textbf{(S2)} and \textbf{(S3)}, while it shows a behaviour in setting \textbf{(S1)} similar to that in \textbf{(S4)}, where the KB is consistent and the query has only two justifications respectively.

\paragraph{Scenario (SC1.3).} Tables~\ref{tab:test-ext-res-t-sc3} and~\ref{tab:test-ext-res-b-sc3} show the running time for \textit{TRILLInc} and \textit{BUNDLEInc} when we increase the number of probabilistic axioms in the KB with $i=5$ of setting \textbf{(S3)}. The columns of the tables include the running time in seconds, DISPONTE and Repair time both in seconds and as a fraction of the total running time.

\begin{table}
	\centering
	\caption{Average running time in seconds computed on 10 executions of the query with the reasoner \textit{TRILLInc} for scenario \textbf{(SC1.3)} and $i=5$, together with the minimum and maximum running time, the times for computing the DISPONTE probability, and for repair semantics computation in seconds with the percentage of the total running time.\label{tab:test-ext-res-t-sc3}}	
	\tiny
	\begin{tabular}{c|c|c|c|c|c|c|c|c}
		& \textbf{Time} & \textbf{Min} & \textbf{Max} & \multicolumn{2}{c|}{\textbf{DISPONTE Time}} & \multicolumn{2}{c}{\textbf{Repair Time}} & \textbf{Repair} \\
		\textbf{\% Prob. axioms}           & \textbf{(s)}  & \textbf{(s)} & \textbf{(s)} & \textbf{\%} &         \textbf{(s)}          & \textbf{\%} &        \textbf{(s)}        & \textbf{Answer} \\ \hline\hline
		Certain KB (0\%)               &    0.0451     &    0.0448    &    0.0457    &   0.0000    &            0.0000             &   0.9983    &           0.0005           &      False      \\
		+ assertion (6\%)              &    0.0455     &    0.0452    &    0.0459    &   9.4396    &            0.0043             &   1.0544    &           0.0005           &      False      \\
		+ disjoint ax. (12\%)            &    0.0544     &    0.0541    &    0.0547    &   8.1064    &            0.0044             &   6.1312    &           0.0033           &      Brave      \\
		+ $B_{i-1}\sqsubseteq P_i\sqcap Q_i$ (41\%) &    0.0677     &    0.0674    &    0.0686    &   7.2031    &            0.0049             &   11.2960   &           0.0076           &      Brave      \\
		+ $P_{i}\sqsubseteq B_i$ (71\%)       &    55.6765    &   55.5751    &   55.8728    &   0.0095    &            0.0053             &   19.9938   &          11.1319           &      Brave      \\
		+ $Q_{i}\sqsubseteq B_i$ (100\%)       &   830.9887    &   812.2609   &   850.0875   &   0.0010    &            0.0084             &   99.9941   &          830.9394          &      Brave      \\ \hline
	\end{tabular}
\end{table}

\begin{table}
	\centering
	\caption{Average running time in seconds computed on 10 executions of the query with the reasoner \textit{BUNDLEInc} for scenario \textbf{(SC1.3)} and $i=5$, together with the minimum and maximum running time, the times for computing the DISPONTE probability, and for repair semantics computation in seconds with the percentage of the total running time.\label{tab:test-ext-res-b-sc3}}	
	\tiny
	\begin{tabular}{c|c|c|c|c|c|c|c|c}
		& \textbf{Time} & \textbf{Min} & \textbf{Max} & \multicolumn{2}{c|}{\textbf{DISPONTE Time}} & \multicolumn{2}{c}{\textbf{Repair Time}} & \textbf{Repair} \\
		\textbf{\% Prob. axioms}           & \textbf{(s)}  & \textbf{(s)} & \textbf{(s)} & \textbf{\%} &         \textbf{(s)}          & \textbf{\%} &        \textbf{(s)}        & \textbf{Answer} \\ \hline\hline
		Certain KB (0\%)               &    16.5428    &    15.939    &    17.54     &   0.0000    &            0.0000             &   0.0375    &           0.0062           &      False      \\
		+ assertion (6\%)              &    17.1649    &   15.9390    &   18.7200    &   0.1573    &            0.0270             &   0.0268    &           0.0046           &      False      \\
		+ disjoint ax. (12\%)            &    17.0809    &   15.8860    &   18.7200    &   0.2061    &            0.0352             &   0.0285    &           0.0049           &      Brave      \\
		+ $B_{i-1}\sqsubseteq P_i\sqcap Q_i$ (41\%) &    17.1833    &   15.8860    &   18.7200    &   0.2307    &            0.0397             &   0.0300    &           0.0052           &      Brave      \\
		+ $P_{i}\sqsubseteq B_i$ (71\%)       &    17.2612    &   15.8860    &   18.7200    &   0.2644    &            0.0456             &   0.7853    &           0.1356           &      Brave      \\
		+ $Q_{i}\sqsubseteq B_i$ (100\%)       &    52.9568    &   50.6870    &   55.0100    &   0.1371    &            0.0726             &   69.3847   &          36.7439           &      Brave      \\ \hline
	\end{tabular}
\end{table}

From the results of this scenario, we can see that, with the increase of the percentage of probabilistic axioms, the complexity of computing the repair semantics becomes larger. Moreover, when all the axioms are probabilistic, the Repair time is large. The main reason is that the system builds BDDs associating Boolean variables only to probabilistic axioms. When all the axioms are probabilistic, each justification is associated with a different set of Boolean variables. Otherwise, if there are certain axioms, there would be different justifications associated with the same set of variables. For example, consider the set of justifications $\{\{ax_1,ax_2,ax_3\},\{ax_1,ax_2,ax_4\}\}$, if all the axioms are probabilistic, then the Boolean formula represented by the BDD is $(X_{ax_1}\vee X_{ax_2}\vee X_{ax_3})\wedge(X_{ax_1}\vee X_{ax_2}\vee X_{ax_4})$, while if $ax_3$ and $ax_4$ are certain, the Boolean formula is $(X_{ax_1}\vee X_{ax_2})\wedge(X_{ax_1}\vee X_{ax_2})=X_{ax_1}\vee X_{ax_2}$, with $X_k$ the Boolean variable associated to axiom $k$. Therefore, having a KB also containing certain axioms significantly reduces the size of the BDD representing the set of justifications and the number of operations to build it. Similarly, it also reduces the BDD combination operations required to compute the repair semantics and the number of Boolean variables $X_{\cC,\cB}$ required for the check of the AR semantics. These results confirm the fact that the increase of probabilistic axioms also increases the complexity of the reasoning because there are more possibility to create consistent worlds and repairs.

From the results we can also see that \textit{TRILLInc} is faster when the percentage of probabilistic axioms is less than 50\%, and its Repair time tends to increase faster than that of \textit{BUNDLEInc}. However, this strictly depends on the BDD library, which unfortunately changes for the two systems.

\paragraph{Discussions.} Overall, the computation of the DISPONTE probability is the task with the lower impact on the whole running time except when the problem is simple, for example, due to a small KB. However, in these cases the execution takes less than one second. The DISPONTE time is slightly affected by the extensions. On the other hand, the overhead added by the repair semantics heavily depends on the query and the number of probabilistic axioms considered. It is important to highlight that the performance is strictly tied to the tableau implementation, which is an approach that requires significant time and resources and could be the bottleneck of the whole computation in many cases. If we are not interested in computing the repair semantics, the systems are significantly faster: in the worst case reported in the results, the running time drops from more than 11 hours to less than 0.3 seconds.

Thus, the answer to \textbf{Q1} is that the proposed extension can hugely impact the running time, with the Repair time dominating the running time. It is important to note, however, that the proposed extension is not specifically tailored to computing repairs, for which there are dedicated systems, such as CQApri (considered for the next question). At the moment, both reasoners allow the user to choose whether to compute it together with the DISPONTE probability. Finally, it is important to highlight that the time ratios between the original reasoners and their extended versions are always affected by the fact that there are more justifications to find when the KB is inconsistent.

\subsection{Q2. How does our approach compare with CQApri?}\label{subsec:q2}
For the second question, we considered a KB used by the authors of CQApri\footnote{Available at \url{https://lahdak.lri.fr/CQAPri}}~\cite{DBLP:journals/jair/BienvenuBG19}, a system for querying DL-Lite KB under AR, IAR and Brave semantics implemented in Java, which exploits a relational database to store the assertions of the KB, a query rewriting engine for the DL-Lite language and a SAT solver to answer queries under AR semantics. The authors of CQApri used a simplified version of the Lehigh University Benchmark (LUBM) ontology~\cite{DBLP:journals/ws/GuoPH05}, which differs from the original version for the removal of the axioms that cannot be modelled by DL-Lite. For this version, they generated different ABoxes of increasing size containing inconsistencies w.r.t. the simplified LUBM TBox.

For the comparison, we ran \textit{BUNDLEInc} against CQApri. Only \textit{BUNDLEInc} is considered here because \textit{TRILLInc} took more than 12 hours to answer queries with the KBs considered in this test. Several optimizations have been developed to make the original version of TRILL more efficient~\cite{DBLP:journals/ws/ZeseC21}, however, some of them cannot be used in \textit{TRILLInc} because they were implemented considering consistent KBs. As already discussed, similar problems can be found in \textit{BUNDLEInc}, which can only use the ``OWLExplanation'' justification generator.

First, we considered version \verb|u1conf0|, a consistent KB, from which we build an OWL KB to use with \textit{BUNDLEInc} containing all the assertions that CQApri stores in the database. The resulting KB models one university and contains 108,864 axioms in the ABox (28,023 class assertions, 47,640 object property assertions and 33,201 data property assertions) for a total of 127,320 axioms in the KB. We made all assertional axioms probabilistic in order to consider the same repairs with both systems.

From this KB, we randomly created 200 Boolean queries of the form $a_i:C_i$ and $(a_i,b_i):R_i$ by sampling individuals $a_i$ and $b_i$, classes $C_i$ and object properties $R_i$. Each of these 200 queries is built to have at least one justification.  We have also generated 200 queries in the same way but having zero justifications, in order to test the performance of \textit{BUNDLEInc} even when the query is not entailed. 
Each query has been run 10 times to ensure that the test machine's load did not affect the results. We computed the relative standard deviation of the 10 executions for each query separately, and it varied from 0.04\% to 6.04\%. The average relative standard deviation, calculated on the 200 values computed (one for each query) was 1.56\% with standard deviation of 1.32\%, first quartile 0.54\%, median 1.19\%, third quartile 2.19\%. This means that the running time of a query executed more times is expected to vary between 0.24\% and 2.88\%. In this way we ensured that the results obtained are not affected by the machine's load and we can concentrate on the results of the test for \textbf{Q2}.

Table~\ref{tab:bundle-cqapri} shows the results in terms of averaged running time in seconds for \textit{BUNDLEInc} and CQApri to solve the 200 queries with justifications. The table shows the average running time with standard deviation across all the queries, the minimum and maximum running times, median, quartiles 1 and 3, with the inter quartile range (IQR) and the number of outliers (computed by counting the number of queries outside the range $[Q1-1.5\cdot QIR,Q3+1.5\cdot QIR]$). The queries considered as outliers for BUNDLE and for CQApri are the same except for 2 of them. In these 2 cases, the running time was outside but close to the range. The correspondence between the outliers for \textit{BUNDLEInc} and CQApri indicates that, among the 200 queries, there are 15.5\% of them that are significantly more complex to solve, the main difference between the two systems is that \textit{BUNDLEInc}'s running time is more dependent on how hard it is to find the justifications for the tableau algorithm, as it can be seen in Figure~\ref{fig:boxplot}.
Table~\ref{tab:bundle} contains more details about the running time of \textit{BUNDLEInc} when solving the 200 queries without justifications and the 200 queries with justifications. For the latter case, the 200 queries are divided in queries with 1 justification and queries with more than 1 justifications. CQApri took the same time to answer queries both with and without justifications, so we do no report its results in Table~\ref{tab:bundle} because they coincide with those reported in Table~\ref{tab:bundle-cqapri}.

\begin{table}
	\caption{Running time in seconds taken by \textit{BUNDLEInc} and CQApri to answer the set of 200 queries with at least one justification. The table shows the average running time with  standard deviation, and the minimum and maximum running time for both systems.}\label{tab:bundle-cqapri}
	\centering
	\footnotesize
	\begin{tabular}{c|c|c|c|c|c|c|c|c}
		\textbf{System}   &   \textbf{Avg. running time}   &                         \multicolumn{6}{c|}{\textbf{Running time (s)}}  &\textbf{N.}                        \\
		& \textbf{$\pm$ std. dev. (s)} & \textbf{Min} & \textbf{Q1} & \textbf{Median} & \textbf{Q3} & \textbf{Max} & \textbf{IQR}  & \textbf{Outliers} \\ \hline\hline
		CQApri       &         11.80 $\pm$ 0.08         &    11.74     &    11.76    &      11.77      &    11.80    &    12.04     & 0.04 &     33      \\
		\textit{BUNDLEInc} &       714.55 $\pm$ 1994.14        &     4.50     &    17.08     &     36.72       &    44.17     &   11,832.96   & 27.09 &     31      \\ \hline
	\end{tabular}
\end{table}

\begin{table}
	\caption{Statistics about \textit{BUNDLEInc} to answer queries with at least one justification and queries without justifications. The table shows the average running time in seconds with  standard deviation, the minimum and maximum running time in the cases the query has 0, 1 and more than 1 justifications. Moreover, in the case of more than 1 justifications, the table shows the average number with  standard deviation, the minimum and maximum number of justifications.}\label{tab:bundle}
	\centering
	\footnotesize
	\begin{tabular}{c|c|c}
		\hline
		\multicolumn{3}{c}{\textbf{0 justifications}}                               \\ \hline
		Avg. running time (s) $\pm$ std. dev. (s) &     Min. running time (s)     &     Max. running time (s)     \\ \hline
		4.685 $\pm$ 0.11              &             4.369             &             5.057             \\ \hline\hline
		\multicolumn{3}{c}{\textbf{1 justification}}                                \\ \hline
		Avg. running time (s) $\pm$ std. dev. (s) &     Min. running time (s)     &     Max. running time (s)     \\ \hline
		28.420 $\pm$ 13.02             &             4.504             &            51.230             \\ \hline\hline
		\multicolumn{3}{c}{\textbf{More than 1 justifications}}                          \\ \hline
		Avg. running time (s) $\pm$ std. dev. (s) &     Min. running time (s)     &     Max. running time (s)     \\ \hline
		4,383.834 $\pm$ 3,057.68          &            480.206            &          11,832.960           \\ \hline
		Avg. number of just.  $\pm$ std. dev.   & Min. number of justifications & Max. number of justifications \\ \hline
		35.32 $\pm$ 19.29             &               9               &              79               \\ \hline
	\end{tabular}
\end{table}

As it can be seen from Table \ref{tab:bundle-cqapri}, \textit{BUNDLEInc}  has an average running time of less than 715 seconds, while CQApri can solve the same queries in less than 12 seconds on average. 
The computation time for the repair semantics with \textit{BUNDLEInc} was 0 because the KB is consistent.  The average DISPONTE time on the set of 200 queries with justifications is $0.09\ (\pm 0.49)$ milliseconds and $0.05\ (\pm 0.22)$ milliseconds for queries with 0 justifications, while the time for computing the repair semantics is $0.07\ (\pm 0.50)$ milliseconds considering only queries with 1 justification, and is $0.13\ (\pm 0.42)$ milliseconds in case of more than 1 justifications. Maybe this small value is due to the number of probabilistic axioms contained in the justifications, which varies between 1 and 3, making the management of the BDDs significantly simpler than in the experiments for \textbf{Q1}. 

The reason for the high standard deviation value for \textit{BUNDLEInc} in Table~\ref{tab:bundle-cqapri} lies probably in the variability of the number of justifications. In particular, there are 10 queries with a number of justifications between 9 and 20 and an average time of 859 seconds, and 21 queries with more than 20 justifications that required an average of 6,268 seconds, with two queries having 71 and 79 justifications and requiring a running time of 9,408.328 and 11,832.960 seconds respectively to be answered. These 31 queries are the 31 outliers reported in Table~\ref{tab:bundle-cqapri}, i.e., queries with running time higher than the range computed on the IQR. If we remove these outliers,  \textit{BUNDLEInc}  is generally slower than CQApri but with comparable times (same order of magnitude,  with an average running time of less than 48 seconds instead of more than the 714 seconds reported in Table \ref{tab:bundle-cqapri}), as shown in Figure~\ref{fig:boxplot}, still with a significant variability in the running times.

\begin{figure}[t]
	\centering
	\includegraphics[width=\linewidth]{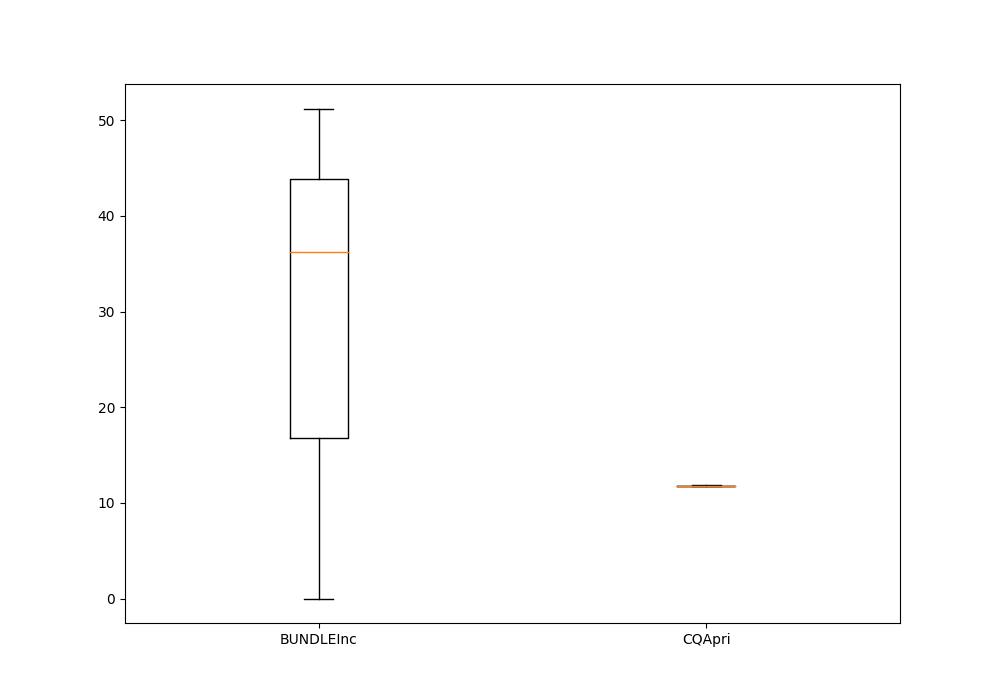}
	\caption{Boxplot with outliers removed of the running time in seconds taken by \textit{BUNDLEInc} and CQApri to answer the set of 200 queries with at least one justification.\label{fig:boxplot}}
\end{figure}

Then we considered  version \verb|u1conf1| of  CQApri's benchmark. This version differs from the previous because the authors added 61 assertion axioms making the KB inconsistent. The resulting KB contains 108,925 axioms and the inconsistency has more than 40 justifications. In this case, while the performance of CQApri was similar to that on \verb|u1conf0|, the search for the justifications with \textit{BUNDLEInc} takes a time more than the 12 hours timeout. This result led us to the third question, analysed in Section~\ref{subsec:q3}.

\paragraph{Discussions.} From the results it appears that \textit{BUNDLEInc} is more affected by the complexity of the query, due to the tableau algorithm, than CQApri, which presents a running time that is more stable and more dependent on the size of the KB than on the complexity of the query. Overall, CQApri is able to cope with larger KBs in a time which is  several orders of magnitude smaller than \textit{BUNDLEInc}, especially when we add inconsistency to the KB. 
However, it is important to bear in mind the main differences between the two systems:
\begin{itemize}
	\item CQApri is tailored to DL-Lite, so it imposes a stronger limitation on the expressivity than \textit{BUNDLEInc}, which can deal with OWL 2 (\sroiqd). This means that CQApri cannot be used to ask the queries of Example~\ref{ex:univ-empl} because complex concepts such as intersections can be used only as superclasses.
	\item CQApri answers queries under the Brave, AR, and IAR semantics, while \textit{BUNDLEInc}, besides these semantics, computes the probability of queries under DISPONTE and returns the set of justifications for both the inconsistency and the query, allowing a full analysis of the query and the debug of the KB at the same time. CQApri can return the repairs, but in this experiment we did not ask for those, because we wanted to concentrate on query answering. We would like to point out that, also for queries with 79 justifications, the Repair time in \textit{BUNDLEInc} is almost instantaneous (average over all the 200 queries of $0.0001 \pm 0.0003$ seconds). 
	\item CQApri can also answer conjunctive queries, while \textit{BUNDLEInc} answers Boolean queries only.
\end{itemize}

\subsection{Q3. How does our approach scale when considering complex KBs?}\label{subsec:q3}
To answer the third question, we analysed the trend of the running time in three scenarios:
\begin{description}
	\item[(SC3.1)] adding each of the 61 assertions causing inconsistency in \verb|u1conf1| one by one;
	\item[(SC3.2)] varying the number of probabilistic axioms in \verb|u1conf0|, with or without the addition of one assertion taken from the 61 causing inconsistency in \verb|u1conf1|;
	\item[(SC3.3)] considering the whole \verb|u1conf1| KB and imposing a limit on the number of justifications to find. Collecting a limited number of justifications does not allow the correct computation of the repair semantics and of the correct value of the DISPONTE probability, which will be a lower bound of the real probability of the query.
\end{description}

\paragraph{Scenario (SC3.1).} For this test we created different KBs starting from \verb|u1conf0| and adding an increasing number of assertions taken from the set of 61 causing inconsistency in \verb|u1conf1|. We slightly modified the KB in order to obtain a query true under AR semantics and we generated two queries: $Q_{IAR}$  always true under IAR semantics and $Q_{AR}$ true under AR semantics when the KB becomes inconsistent. Both queries have three justifications, one composed of two axioms and two of four axioms. We collected the average total, DISPONTE and Repair time, and the number of justifications for inconsistency. Table \ref{tab:bundle-inc-up} shows the results obtained in this scenario: the assertions add many conflicts, and so many justifications for inconsistency. We can see that DISPONTE and Repair time is small, while the time to find the justifications rapidly increases becoming larger than 12 hours with four assertions. As expected, query answering under the AR semantics is more expensive than under the IAR semantics. The time to compute the truth value under AR semantics is similar to that of the Brave and IAR semantics.  
\begin{table}
	\centering
	\caption{Average running time in seconds needed by  \textit{BUNDLEInc} for computing the DISPONTE probability and the repair semantics for the IAR and the AR queries. Values ``--'' mean timeout (set to 12 hours).\label{tab:bundle-inc-up}}	
	\tiny
	\begin{tabular}{c|c|c|c|c|c|c|c|c}
		\textbf{\# added}  & \textbf{Query} & \textbf{Repair} & \textbf{Time} & \multicolumn{2}{c|}{\textbf{DISPONTE Time}} & \multicolumn{2}{c|}{\textbf{Repair Time}} &   \textbf{\# Incons.}   \\
		\textbf{assertions} & \textbf{Sem.}  &                 & \textbf{(s)}  & \textbf{\%} &         \textbf{(s)}          & \textbf{\%} &        \textbf{(s)}         & \textbf{Justifications} \\ \hline\hline
		\multirow{2}{*}{0}  &   $Q_{IAR}$    &       IAR       &    178.19     &   0.0006    &             0.001             &   0.0011    &            0.002            &   \multirow{2}{*}{0}    \\
		&    $Q_{AR}$    &       IAR       &    111.708    &   0.0009    &             0.001             &   0.0027    &            0.003            &                         \\ \hline
		\multirow{2}{*}{1}  &   $Q_{IAR}$    &       IAR       &    260.790    &   0.0004    &             0.001             &   0.0008    &            0.002            &   \multirow{2}{*}{2}    \\
		&    $Q_{AR}$    &       AR        &    392.499    &   0.0003    &             0.001             &   0.0013    &            0.005            &                         \\ \hline
		\multirow{2}{*}{2}  &   $Q_{IAR}$    &       IAR       &   3,776.196   &   0.0001    &             0.003             &   0.0001    &            0.004            &   \multirow{2}{*}{27}   \\
		&    $Q_{AR}$    &       AR        &   3,401.217   &   0.0001    &             0.003             &   0.0002    &            0.007            &                         \\ \hline
		\multirow{2}{*}{3}  &   $Q_{IAR}$    &       IAR       &  18,831.849   &   0.00004   &             0.007             &   0.00002   &            0.004            &   \multirow{2}{*}{34}   \\
		&    $Q_{AR}$    &       AR        &  17,456.931   &   0.00003   &             0.006             &   0.00005   &            0.008            &                         \\ \hline
		4          &       --       &       --        &      --       &     --      &              --               &     --      &                             &                         \\ \hline
	\end{tabular}
\end{table}

\paragraph{Scenario (SC3.2).} We considered six different versions of the \verb|u1conf0| KB: \verb|u1conf0| without probabilistic axioms, with only assertions made probabilistic and with all the axioms made probabilistic, similarly to what done in scenario \textbf{(SC1.3)} of \textbf{Q1}, and \verb|u1conf0| with one assertion causing inconsistency (as the second row of Table \ref{tab:bundle-inc-up}) without probabilistic axioms, with only assertions made probabilistic and with all axioms made probabilistic. We ran the same 200 queries with at least one justification built for the experiments in \textbf{Q2}\footnote{As in \textbf{Q2}, we ran the 200 queries 10 times to be sure that the running times were not affected by variation of the machine's load. Similar to \textbf{Q2},  running time on the 10 executions for each query presented a standard deviation that varies between 0.07\% to 6.89\% (average of 1.72\%$\pm$1.42\%, first quart. 0.60\%, median 1.24\%, third quart. 2.31\%) of the average running time of the query, ensuring correctness of Table \ref{tab:bundle-prob-up}.}. From Table \ref{tab:bundle-prob-up} we can see that, for the same KB \verb|u1conf0| or inconsistent \verb|u1conf0|, the time for the computation of the DISPONTE probability is negligible independently from the number of probabilistic axioms in the KB, and it clearly increases with the number of probabilistic axioms. On the other hand, the search for justifications (that represents the largest contribution to the total running time) is weakly dependent from the number of probabilistic axioms, as expected the first three rows show similar running time, the same for the last three rows. However, interestingly, the ``OWLExplanation'' generator used by the reasoners underlying \textit{BUNDLEInc} explores the justification search space differently depending on the number of annotated axioms present in the KB, making reasoning on the KB containing all probabilistic axioms slightly faster than reasoning on the KB with only probabilistic assertions. This is probably due to a different order in which the axioms are considered when searching for justifications, an order that guides the exploration of the search space.

\begin{table}
	\centering
	\caption{Average running time in seconds needed by \textit{BUNDLEInc} for answering queries w.r.t. KB \texttt{u1conf0} and KB \texttt{u1conf0} having an axiom causing inconsistency with increasing number of probabilistic axioms.\label{tab:bundle-prob-up}}	
	\tiny
	\setlength{\tabcolsep}{3pt} % Avoided overfull hbox
	\renewcommand{\arraystretch}{1.1}
	\begin{tabular}{c|c|c|c|c|c|c|c}
		& \textbf{Avg. running time (s)} & \textbf{Min} & \textbf{Max} & \multicolumn{2}{c|}{\textbf{DISPONTE Time}} & \multicolumn{2}{c}{\textbf{Repair Time}} \\
		\textbf{KB}                 &  \textbf{$\pm$ std. dev. (s)}  & \textbf{(s)} & \textbf{(s)} & \textbf{\%} &         \textbf{(s)}          & \textbf{(\%)} &       \textbf{(s)}       \\ \hline\hline
		\verb|u1conf0| no prob. axioms        &      290.374 $\pm$ 888.12      &    1.176     &  2,462.829   &   0.0000    &            0.0000             &     0.000     &          0.000           \\
		\verb|u1conf0| only assertions prob.     &      317.180 $\pm$ 986.38      &    1.349     &  2,850.410   &   0.0001    &            0.0003             &     0.000     &          0.000           \\
		\verb|u1conf0| all axioms prob.       &      210.798 $\pm$ 675.07      &    1.406     &  1,976.720   &   0.0002    &            0.0004             &     0.000     &          0.000           \\ \hline
		incons. \verb|u1conf0| no prob. axioms    &     471.298 $\pm$ 1209.23      &    59.634    &  3,318.127   &   0.0000    &            0.0000             &    0.00005    &          0.0002          \\
		incons. \verb|u1conf0| only assertion prob. &     512.061 $\pm$ 1334.60      &    67.496    &  3,844.189   &   0.00004   &            0.0002             &    0.00003    &          0.0001          \\
		incons. \verb|u1conf0| all axioms prob.   &      352.537 $\pm$ 964.41      &    35.726    &  2,784.325   &   0.00010   &            0.0003             &    0.00006    &          0.0002          \\ \hline
	\end{tabular}
\end{table}

\paragraph{Scenario (SC3.3).} We modified the \verb|u1conf1| KB by associating a probability to all assertions in the KB. We imposed an increasing limit on the number of justifications to find. We ran the same 200 queries with at least one justification built for \textbf{Q2} and we collected the average running time for limit of the number of justifications with values starting from 5 and increasing of 5 each test. Table \ref{tab:bundle-inc-max} shows the results obtained in this scenario. As it can be seen, the running time quickly increases  with the limit on the number of justifications to find. With limit equals to 75 the running time became larger than the 12 hours timeout. However, the DISPONTE and Repair time is a small percentage of the whole running time.

\begin{table}
	\centering
	\caption{Average running time in seconds needed by \textit{BUNDLEInc} for computing the DISPONTE probability and the repair semantics w.r.t. \texttt{u1conf1} when increasing the limit of justifications to find. Values ``--'' mean timeout (set to 12 hours).\label{tab:bundle-inc-max}}	
	\tiny
	\begin{tabular}{c|c|c|c|c|c}
		\textbf{\# Just.} & \textbf{Time} & \multicolumn{2}{c|}{\textbf{DISPONTE Time}} & \multicolumn{2}{c}{\textbf{Repair Time}} \\
		\textbf{limit}   & \textbf{(s)}  & \textbf{\%} &         \textbf{(s)}          & \textbf{\%} &        \textbf{(s)}        \\ \hline\hline
		5         &    218.290    &   0.0060    &             0.013             &   0.0087    &           0.019            \\
		10         &    624.621    &   0.0029    &             0.018             &   0.0197    &           0.123            \\
		15         &   1,069.033   &   0.0022    &             0.023             &   0.0210    &           0.225            \\
		20         &   1,485.274   &   0.0020    &             0.030             &   0.0222    &           0.329            \\
		25         &   2,209.643   &   0.0015    &             0.034             &   0.0197    &           0.436            \\
		30         &   2,787.249   &   0.0014    &             0.040             &   0.0195    &           0.543            \\
		35         &   3,273.944   &   0.0014    &             0.045             &   0.0197    &           0.645            \\
		40         &   4,367.650   &   0.0012    &             0.051             &   0.0170    &           0.744            \\
		45         &   5,614.045   &   0.0010    &             0.055             &   0.0152    &           0.852            \\
		50         &   7,661.127   &   0.0008    &             0.060             &   0.0125    &           0.956            \\
		55         &  10,203.235   &   0.0006    &             0.066             &   0.0104    &           1.063            \\
		60         &  14,851.796   &   0.0005    &             0.072             &   0.0078    &           1.165            \\
		65         &  21,751.315   &   0.0004    &             0.078             &   0.0058    &           1.267            \\
		70         &  40,451.613   &   0.0002    &             0.083             &   0.0034    &           1.373            \\
		75         &      --       &     --      &              --               &     --      &             --             \\ \hline
	\end{tabular}
\end{table}

\paragraph{Discussions.} The results obtained show that the time to compute the DISPONTE probability and the repair semantics slightly impact the running time, while the main limitation to scalability is the search for justifications. These results confirm what we have found for question \textbf{Q2}.

\subsection{Overall Discussions}
From these results we can observe that our approach suffers of two main problems. First, the search for the justifications may be unfeasible w.r.t. some KBs. This is due to the tableau method  used to collect them. However, the search method can be replaced without affecting the computation of the DISPONTE probability and the repairs semantics. On the other hand, the computation of the DISPONTE probability is always extremely fast, independently from the number of probabilistic axioms included in the KB. Second, the time to compute the repair semantics depends more strongly from the number of probabilistic axioms and the number of justifications and becomes very high when all axioms of the KB are probabilistic and the number of different justifications is high, as shown by setting \textbf{(S3)} of scenario \textbf{(SC1.2)} and by scenario \textbf{(SC1.3)} of \textbf{Q1}. However, these last cases have 2048 justifications, a number difficult to find even in larger KBs.

\section{Conclusions}
\label{sec:conc}
We have presented a simple  approach to cope with inconsistent KBs, which does not require  changing the syntax of the logic and exploits the probabilistic semantics DISPONTE. 
Inference algorithms that use the tableau and can build the set of all the justifications of a query could be easily adapted to the new task.  This allows the application of this approach to every DL language equipped with a suitable set of tableau expansion rules. We implemented the proposed extension in TRILL, developing \textit{TRILLInc}, and BUNDLE, developing \textit{BUNDLEInc}, and tested them w.r.t. two different KBs. For the future, we plan to study the generalization to FOL of the presented extensions of the semantics and possible optimizations to better cope with the repair semantics.

\section*{Acknowledgment}
Research funded by the Italian Ministry of University and Research through PNRR - M4C2 - Investimento 1.3 (Decreto Direttoriale MUR n. 341 del 15/03/2022), Partenariato Esteso PE00000013 - ``FAIR - Future Artificial Intelligence Research'' - Spoke 8 ``Pervasive AI'', funded by the European Union under the ``NextGeneration EU programme''. This work was also partly supported by the ``National Group of Computing Science (GNCS-INDAM)'' and by the Spoke 1 ``FutureHPC \&
BigData'' of the Italian Research Center on High-Performance Computing,
Big Data and Quantum Computing (ICSC) funded by MUR Missione 4 - Next
Generation EU (NGEU).

\bibliographystyle{alphaurl}
\bibliography{bibl}

\end{document}